\documentclass[mlmain,onecolumn]{jmlr}

\usepackage{booktabs}
\allowdisplaybreaks

\newcommand{\Zp}{\mathbb{Z}_p}
\newcommand{\Kmax}{K_{\mathrm{max}}}
\newcommand{\Keff}{K_{\mathrm{eff}}}
\newcommand{\Linf}{L_{\infty}}
\newcommand{\hc}{h_{c}}

\jmlrvolume{}
\firstpageno{1}
\jmlryear{}
\jmlrproceedings{}{Preprint. Under review at NeurReps 2026}

\title[Symmetry without a manifold]{Symmetry without a manifold:
intrinsic dimension on orbits}

\author{\Name{Chon-Fai Kam\nametag{\thanks{Corresponding author.}}}
  \Email{dubussygauss@gmail.com}\\
  \addr Dipartimento di Fisica e Chimica, Universit\`a degli Studi di Palermo,
  via Archirafi 36, I-90123 Palermo, Italy\\
  \addr Universit\'e Paris Cit\'e and Universit\'e de La R\'eunion,
  BIGR, INSERM UMR\_S1134, F-75014 Paris, France
  \AND
  \Name{Miloud Bessafi} \Email{miloud.bessafi@gmail.com}\\
  \addr EnergyLab, Universit\'e de La R\'eunion, F-97715 Saint-Denis, France
  \AND
  \Name{Fr\'ed\'eric Cadet} \Email{frederic.cadet.run@gmail.com}\\
  \addr Universit\'e Paris Cit\'e and Universit\'e de La R\'eunion,
  BIGR, INSERM UMR\_S1134, F-75014 Paris, France\\
  \addr PEACCEL, AI for Biologics, F-75013 Paris, France}

\makeatletter\@ifundefined{@org@Ginclude@graphics}{}{\let\Ginclude@graphics\@org@Ginclude@graphics}\makeatother

\begin{document}
\maketitle

\begin{abstract}
The standard geometric derivation of neural scaling exponents takes the
intrinsic dimension of a data manifold as its input. On modular addition in
$\Zp$ that derivation has no input. The exact algebraic solution is an orbit of
$\Zp$ acting by isometries. Transitivity alone makes the ratio statistic
underlying the standard dimension estimator a point mass, so the estimator is
undefined, and here the two nearest neighbour distances coincide exactly. Breaking the symmetry at scale
$\epsilon$ returns a number, but one that tracks $1/\epsilon$ with no scale
free plateau. We show that the failure is general, since on any finite orbit of
a group acting by isometries the estimator reports the resolution at which the
set is probed rather than a dimension. What
replaces the power law is exponential in hidden width,
$L(h)=\Linf+A\exp(-c\,h^{\alpha})$, with $R^{2}$ between $0.982$ and $0.995$
against $0.857$ to $0.906$ for a power law admitting the same floor and fitted
under the same protocol. Where the data supply is sufficient the rate belongs
to the regulariser rather than to the group, since weight decay moves $c$ by a
factor of $47$ while group order moves it by $1.10$, a residual below seed to
seed resolution, for every fixed $\alpha$ between $0.75$ and $2$.
The critical width falls with group order rather than rising, against capacity
counting that assigns a fixed number of neurons to each irreducible
representation.
\end{abstract}
\begin{keywords}
representational geometry, neural scaling laws, group representations,
modular arithmetic, intrinsic dimension, grokking, weight decay
\end{keywords}

\section{Introduction}

A power law is a strong claim about a system. It says that no scale is
preferred, that the same relative improvement follows from the same relative
investment at every size. Neural scaling laws are reported as power laws
across seven orders of magnitude in model size and across architectures,
modalities and languages \citep{kaplan2020,hoffmann2022,bahri2024}, and the
derivations that explain why the form should be a power law all pass through
an assumption about the data. This paper reports a task on which that
assumption fails in the strongest sense available: the quantity the standard
derivation takes as input does not exist. The failure is structural rather
than numerical, and it does not stay confined to the task, because the object
responsible is a group orbit.

Wherever a representation is a finite orbit of a group acting by isometries,
the two nearest neighbour estimator is degenerate, and any dimension it reports
under perturbation is a property of the probe scale rather than of the set. The
condition is finiteness, not symmetry. A continuous ring attractor or toroidal
population code \citep{gardner2022} is not covered, and a continuous torus
given the same treatment returns a stable value near two. What is covered is
the sampled case, which is the usual experimental one: a population recorded at
$N$ equally spaced stimulus conditions is itself a finite orbit, and an
intrinsic dimension estimated from it reports the probe scale rather than the
geometry of the underlying attractor.

Two derivations dominate. The first routes the exponent through geometry,
obtaining $L\sim N^{-4/d}$ from the cell size a $d$ dimensional manifold
partitioned by $N$ parameters admits \citep{sharma2022}. The second routes it
through counting: if a task decomposes into subtasks whose use frequencies
follow a Zipf distribution and each learned subtask reduces the loss by a fixed
amount, the cumulative loss is again a power law
\citep{michaud2023,brill2024}. The two accounts differ in almost everything
except the ingredient that produces the exponent, which in both cases is a
scale free distribution over structure in the data.

Algebraic tasks supply neither ingredient, and the literature has already
noticed that something goes wrong. These tasks entered the field through
grokking \citep{power2022,liu2023omnigrok,varma2023}, and loss curves on
algorithmic problems show pronounced phase transitions that depart from the
established power law trend \citep{naidu2025}. The geometric relation has been
contradicted directly on a one dimensional regression problem where the
measured exponent was $1$ against a prediction of $4$ \citep{liu2023}. What
has not been asked is what replaces the power law when it fails, and whether
the quantity the geometric derivation requires as input exists at all. For
modular addition it does not, and the reason can be stated before any
measurement: the exact solution is known in closed form
\citep{nanda2023,gromov2023,chughtai2023} and its image is an orbit of $\Zp$
acting by isometries, not a sample from a density.

The governing hypothesis is that the rate of decay factorises as
$\log c=f(\lambda)+g(p)$ with $g$ flat, so that the regularisation budget and
not the order of the group sets the rate, and the design of
Sec.~\ref{sec:hyp} is built so that the two sweeps can disagree. The choice of
architecture is what makes the separation readable, since a two layer network
with quadratic activation admits an exact solution whose Fourier content is
known term by term \citep{gromov2023,doshi2024}.

We contribute the following. Intrinsic dimension is undefined on group orbit
representations, not merely mismeasured: the two nearest neighbour ratio is a
point mass on any orbit of a finite group acting by isometries
(Proposition~\ref{prop:orbit}), equals exactly $1$ for the representations at
issue (Lemma~\ref{lem:coincide}), and under perturbation of scale $\epsilon$
the estimate tracks $1/\epsilon$ with no plateau. The test loss on $\Zp$ is
exponential in hidden width rather than polynomial in parameter count, over
four and a half decades, and a power law admitting the same floor, fitted under
the same protocol, is rejected at every group order and at every fixed exponent
between $0.75$ and $2$. Given sufficient data the rate factorises: weight decay
moves it by a factor of $47$ and group order by $1.10$, a residual below seed
to seed resolution, at every fixed exponent in that range. And the width at which
generalisation appears falls as the group grows, contradicting in sign as well
as magnitude any capacity argument that assigns a fixed number of neurons to
each irreducible representation. On this task the geometric route has no input,
the power law is replaced by an exponential in hidden width, and the rate of
that exponential belongs to the training regime and not to the group.

\section{Problem formulation}
\label{sec:setup}

\subsection{Task and model}

The task is addition in the cyclic group $\Zp$ for $p$ prime. Inputs are pairs
$(a,b)\in\Zp\times\Zp$ encoded as two concatenated one hot vectors, so the
input dimension is $2p$, and the target is $(a+b)\bmod p$ treated as $p$ way
classification. Half of the $p^{2}$ pairs are held out, the partition drawn
once from a fixed seed so that it is identical across every width, weight
decay and initialisation. Table~\ref{tab:notation} collects the notation.

The network follows \citet{gromov2023}. Writing $x$ for the input,
\begin{equation}
f(x) \;=\; W_{2}\,\sigma\!\left(W_{1}x\right),
\qquad \sigma(z)=z^{2},
\label{eq:model}
\end{equation}
with $W_{1}\in\mathbb{R}^{h\times 2p}$ and $W_{2}\in\mathbb{R}^{p\times h}$ and
no biases. The quadratic activation is what puts the Fourier content of the
solution in closed form, since the cross term at
frequency $k$ in the square of a sum of cosines produces
$\cos\omega_{k}(a+b)$ directly. The parameter count is $N=3ph$, so width and
parameter count are proportional at fixed $p$.

Training is full batch AdamW \citep{loshchilov2019}, with weight decay the
principal control variable, swept over
$\{0,\,0.1,\,0.25,\,0.5,\,1,\,2,\,4\}$ and defaulting to $1$ where a single
value is needed. Three seeds are run at every point. Reported quantities are
the cross entropy and the accuracy on the held out half, together with the two
spectral ones of Sec.~\ref{sec:candidates}. Appendix~\ref{app:fit} gives the
optimiser settings, the initialisation and the step budget.

\subsection{The exact solution as a group orbit}
\label{sec:orbit-setup}

The geometry the paper measures is fixed by representation theory before any
network is trained. Writing $\omega_{k}=2\pi k/p$, the real irreducible
representations of $\Zp$ are the trivial one together with $\Kmax=(p-1)/2$ two
dimensional ones, in which $\rho_{k}(m)$ is the rotation by $\omega_{k}m$ and
$k$ and $p-k$ are equivalent. \emph{Frequency} and \emph{irreducible
representation} are therefore the same object counted in two languages, and
$\Kmax$ is the number of either available at group order $p$.

For a nonempty $S\subseteq\{1,\dots,\Kmax\}$ define the embedding
\begin{equation}
\Phi_{S}:\Zp\to\mathbb{R}^{2|S|},
\qquad
\Phi_{S}(n) \;=\; \big(\cos\omega_{k}n,\ \sin\omega_{k}n\big)_{k\in S},
\label{eq:phiS}
\end{equation}
and write $X_{S}=\Phi_{S}(\Zp)$ for its image. The exact algebraic solutions of
the task, known in closed form
\citep{nanda2023,gromov2023,chughtai2023}, place the $p$ input tokens at the
points of $X_{S}$ for some $S$. The group acts on $X_{S}$ by
\begin{equation}
m\cdot\Phi_{S}(n) \;=\; \Phi_{S}(n+m)
\;=\; \Big(\bigoplus_{k\in S}\rho_{k}(m)\Big)\,\Phi_{S}(n),
\label{eq:action}
\end{equation}
which is block diagonal in rotations and hence an isometry of the ambient
space. The action is transitive by construction, and it is free because $p$ is
prime and $S$ is nonempty, so $\Phi_{S}$ is injective and $|X_{S}|=p$.

The object of study is thus a single orbit of a finite group acting by
isometries, not a sample from a density on a manifold, and
Sec.~\ref{sec:manifold} shows that this is what breaks the geometric
derivation.

\subsection{Candidate controlling variables}
\label{sec:candidates}

The literature advances three quantities as what sets performance on a task of
this kind, and the study measures all three across problem sizes rather than at
one. The first is the intrinsic dimension $d$, which the
geometric derivation \citep{sharma2022} takes as its input, estimated by the
two nearest neighbour method of \citet{facco2017} on $X_{S}$ or on the
embeddings a trained network learns. The second is the number of irreducible
representations the network carries, measured by the aggregate participation
ratio $\Keff$ of the spectral power in the embedding block of $W_{1}$ and by
the per neuron ratio obtained by the same construction within a single hidden
unit, both defined in Appendix~\ref{app:coupon}, Eq.~\eqref{eq:keff}. The
third is the width $\hc$ at which held out accuracy first crosses $0.9$, where
a capacity argument would locate the dependence on group order.

\subsection{Hypotheses}
\label{sec:hyp}

Competing accounts of what limits performance on this task make opposite
predictions, and the design is chosen so that they can disagree. Under a \emph{capacity}
account the binding constraint is the number of irreducible representations the
network can carry, so the rate at which the loss falls with width, and the
width $\hc$ at which generalisation appears, should both scale with $\Kmax$ and
hence with $p$. Under a \emph{budget} account the binding constraint is not the
availability of modes but the norm in which they are expressed, so the rate
should be a function of the regularisation strength $\lambda$ alone. Writing
$c$ for that rate, the budget account predicts the factorisation
\begin{equation}
\log c \;=\; f(\lambda) \;+\; g(p),
\qquad g \ \text{constant in}\ p,
\label{eq:fact}
\end{equation}
which is falsified by any resolvable dependence of $g$ on group order.

Sweeping width at fixed data separates capacity from optimisation, and
sweeping weight decay at fixed width separates the norm budget from capacity.
We measure the test loss for eight group orders between $23$ and $113$, across
fourteen widths from $8$ to $128$ and seven weight decay strengths, with three
seeds at every point. Three questions organise what follows: whether $d$
exists on $X_{S}$ at all (Sec.~\ref{sec:manifold}), which functional family
describes $L(h)$ (Sec.~\ref{sec:exp}), and which sweep moves the rate
(Sec.~\ref{sec:fact}).

\begin{figure}[t]
\centering
\includegraphics[width=0.66\textwidth]{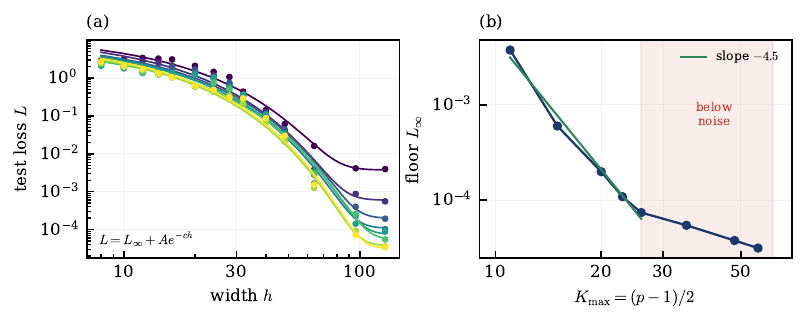}
\caption{Test loss against width at weight decay $1$ with fits to
Eq.~\eqref{eq:law} at $\alpha=1$ (a), and the fitted floor against the number
of available frequencies (b). Beyond $\Kmax=26$ the floor falls into the
plateau noise documented in Appendix~\ref{app:fit} and the flattening there is
not fitted.}
\label{fig:law}
\end{figure}

\section{The loss is exponential in width}
\label{sec:exp}

The first question is whether the received functional form describes the data
at all. Fitting $\log L$ against $\log N$ pooled over all $112$
configurations at weight decay $1$ gives $R^{2}=0.726$, while fitting $\log L$
against $h$ gives $R^{2}=0.924$, and per group order the best power law
reaches $R^{2}$ between $0.857$ and $0.906$. The residuals are not scattered
but arc shaped, giving a Wald--Wolfowitz runs statistic of $z=-2.78$ at every
one of the eight group orders (Appendix~\ref{app:fit}). The local slope varies
in magnitude from $0.04$ to $8.2$ and turns negative near $h=12$ to $16$,
and a power law cannot accommodate a sign change in its own exponent.

A plain exponential improves matters but introduces a new instability, since
the curve saturates at the top of the width range and a form without a floor
compensates by tilting, drifting by up to $105\%$ under choices of cutoff.
Admitting a floor removes it, as Fig.~\ref{fig:law}(a) shows. Writing
\begin{equation}
L(h) \;=\; \Linf \;+\; A\,e^{-c\,h^{\alpha}},
\label{eq:law}
\end{equation}
and fixing $\alpha=1$ for the moment, the fit reaches $R^{2}$ between $0.982$
and $0.995$ across the eight group orders, the rate becomes
$c=0.1180\pm0.0065$, and the drift under truncation falls to $0.3\%$
(Appendix~\ref{app:fit}).

The comparison is not one of parameter count. A power law admitting the same
floor,
\begin{equation}
L(h) \;=\; \Linf \;+\; A\,h^{-\gamma},
\label{eq:powfloor}
\end{equation}
fitted to $\log L$ over the same width range under the same rejection rules and
the same seed aggregation, reaches $R^{2}$ between $0.857$ and $0.906$ against
$0.982$ to $0.995$ for Eq.~\eqref{eq:law}, with $\Delta\mathrm{AIC}$ between
$25.5$ and $40.9$ in favour of the exponential at every group order and at
every fixed $\alpha$ between $0.75$ and $2$. The third parameter is not what
decides it: the floor recovered by Eq.~\eqref{eq:powfloor} is not identified by
the data, since a power law already approaches zero at a polynomial rate and
has nothing for a floor to absorb (Appendix~\ref{app:fit}). The extra
parameter absorbs a systematic feature of the data and thereby stabilises the
parameter of interest.

The exponent $\alpha$ is not determined by these measurements and the paper
does not claim it. Left free it converges to $1.583\pm0.254$, and over
$[0.75,2]$ the fitted $c$ ranges by a factor of $214$ while $R^{2}$ moves only
from $0.978$ to $0.994$ (Appendix~\ref{app:fit}). Section~\ref{sec:fact}
reports what survives the degeneracy, and it survives because the claim made
there concerns how $c$ responds to a change in conditions, not its value.

The floor $\Linf$ falls with group order, from $3.7\times10^{-3}$ at $p=23$ to
$3.1\times10^{-5}$ at $p=113$ as Fig.~\ref{fig:law}(b) shows, its logarithm
correlating with $\log\Kmax$ at $-0.94$. Above $\Kmax=26$ the apparent
flattening is an artefact of measurement rather than a property of the task,
since the loss there executes a random walk within a flat basin whose
amplitude exceeds the differences between adjacent floors
(Appendix~\ref{app:fit}), so the slope is fitted only for $\Kmax\le26$.

\section{The geometric route has no input}
\label{sec:manifold}

Having established that the loss is not a power law, we turn to the derivation
that predicts one. The relation $\alpha_{N}=4/d$ requires a manifold dimension
$d$, and on $\Zp\times\Zp$ the naive answer is $2$. Fitted over the width range
of Sec.~\ref{sec:exp}, the best power law returns an exponent of $3.87$ at
$p=47$ and between $2.87$ and $4.20$ across the eight group orders, implying
dimensions between $0.95$ and $1.39$. The implied dimension is not close to
$2$, and it is not stable, since the exponent rises by half again from $p=23$ to
$p=113$, where the derivation supplies no mechanism for it to move at all. What
the geometric route delivers here is not a number to be checked against $4/d$.

The deeper problem is that $d$ does not exist here. The standard estimator of
\citet{facco2017} recovers $d$ as the shape of the Pareto law followed by the
ratio $\mu_{i}=r_{2}/r_{1}$ of each point's two nearest neighbour distances,
and is accurate on manifolds of known dimension even at the sample sizes
available here (Appendix~\ref{app:orbit}).

On the exact algebraic solution it returns nothing at all. The reason is
structural and is stated as follows.

\begin{proposition}[Degeneracy on a group orbit]
\label{prop:orbit}
Let $G$ be a finite group acting transitively on a finite set
$X\subset\mathbb{R}^{m}$ by isometries. Then $\mu_{i}=r_{2}(i)/r_{1}(i)$ takes
the same value for every $i\in X$, the empirical distribution of $\mu$ is a
point mass, and the two nearest neighbour regression is undefined.
\end{proposition}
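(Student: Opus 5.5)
The plan is to show that the multiset of distances from a point to the rest of $X$ is a $G$-invariant object. Then $r_{1}$ and $r_{2}$, and hence $\mu$, are constant on $X$. The degeneracy of the regression follows from the form of the Facco estimator.

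\emph{Step 1.} Fix $i,j\in X$. By transitivity there is $g\in G$ with $g\cdot i=j$. The map $x\mapsto g\cdot x$ is a bijection of $X$ onto itself, since a group action has $g^{-1}$ as inverse. It is an isometry, so $\|g\cdot x-g\cdot i\|=\|x-i\|$ for every $x\in X$.

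Therefore $x\mapsto g\cdot x$ carries $X\setminus\{i\}$ bijectively onto $X\setminus\{j\}$ and preserves distances to the base point. The sorted lists of distances from $i$ and from $j$ to the remaining points coincide as multisets. In particular their first two entries agree: $r_{1}(i)=r_{1}(j)$ and $r_{2}(i)=r_{2}(j)$.

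One convention needs a remark. If the nearest distance is attained more than once, $r_{2}=r_{1}$. This is exactly the situation of Lemma~\ref{lem:coincide}. The argument above still holds verbatim, because it compares sorted multisets rather than particular neighbours. We also need $|X|\ge 3$ so that $r_{2}$ is defined, and $r_{1}>0$ because $X$ is a set of distinct points.

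\emph{Step 2.} From Step 1, $\mu_{i}=r_{2}(i)/r_{1}(i)$ equals a common value $\mu_{*}\ge 1$ for every $i$. The empirical distribution $\frac1{|X|}\sum_{i}\delta_{\mu_{i}}$ is therefore $\delta_{\mu_{*}}$.

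\emph{Step 3.} Recall the estimator of \citet{facco2017}. It regresses $-\log(1-F^{\mathrm{emp}}(\mu_{(i)}))$ on $\log\mu_{(i)}$ through the origin, with $F^{\mathrm{emp}}(\mu_{(i)})=i/N$ on the sorted sample, and the slope is $d$. With all $\mu_{i}$ equal, every abscissa is the same number $\log\mu_{*}$, so the design has zero variance in the regressor. The ordinates are either distinct at a single abscissa or, under tie-aware ranking, the single value $F=1$, which gives $-\log 0=\infty$. Either way the least-squares slope is not identified.

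If $\mu_{*}=1$, the abscissa is $0$ and the regression through the origin is $0/0$. The maximum-likelihood variant $\hat d=N/\sum_{i}\log\mu_{i}$ is then infinite. If $\mu_{*}>1$, a single abscissa cannot determine a Pareto shape either, since a point mass is not in the Pareto family for any finite $d$.

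The main obstacle is this last step. "Undefined" depends on the implementation convention for ties and for the empirical CDF. I would state the degeneracy as non-identifiability of the slope from a single-abscissa design, covering both the $\mu_{*}=1$ and $\mu_{*}>1$ cases, rather than rely on one specific formula.
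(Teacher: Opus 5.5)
Your proposal is correct and follows the paper's proof essentially step for step. Transitivity and the isometry property make the multisets of distances from any two points coincide, so $\mu$ is constant and its empirical law is a point mass; the through-origin regression is then $0/0$ when $\mu_{*}=1$, and otherwise has a slope fixed by the plotting convention rather than by the geometry. One small correction: your sentence ``Either way the least-squares slope is not identified'' is too strong for $\mu_{*}>1$ with distinct plotting positions, since the through-origin slope is then numerically defined (it equals $\bar{y}/\log\mu_{*}$), so ``undefined'' there means convention-dependent, which is how the paper phrases it and what your closing remark already says.
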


Transitivity alone suffices, and the proof is one line: an isometry taking
$x$ to $y$ carries the distances out of $x$ onto those out of $y$, so
$r_{1}$, $r_{2}$ and their ratio agree at every point and the regression has
no variation in its design variable. The action of Eq.~\eqref{eq:action} is in
addition free, which fixes $|X_{S}|=p$.
Appendix~\ref{app:orbit} proves the proposition and shows that at full
frequency content the $p$ embedding vectors form a regular simplex with
pairwise cosine similarity $-1/(p-1)$ identically, which at $p=47$ is
$-0.0217$ with a sample standard deviation of zero to machine precision.
Breaking the symmetry with additive noise of scale $\epsilon$ produces a
number, but not a stable one, tracking $1/\epsilon$ over a factor of thirty
in the probe scale with no plateau, where a genuine torus given the same
treatment returns a stable value near $2$ (Appendix~\ref{app:orbit}). The
dimension of an algebraic solution is not a property of the set but of the
resolution at which it is probed, and $4/d$ has no input to take. What is not
immediate is the scope. This is not small sample noise, since it holds at
every $N$ and every embedding; not special to $\Zp$, since transitivity is the
only hypothesis; and not repaired by perturbation, since the number that then
appears tracks the probe rather than the set.

Trained networks reproduce this: at $p=47$ and width $64$, at unit test
accuracy, the estimator returns $68$ on the learned embeddings against an
ambient dimension of $64$, and grows with width. The simplex describes full
frequency content, not what a trained network shows, and
\citet{beyondnc2026} report a low rank cyclic geometry instead; both are
orbits, so Proposition~\ref{prop:orbit} covers either
(Appendix~\ref{app:orbit}).

\begin{figure}[t]
\centering
\includegraphics[width=0.66\textwidth]{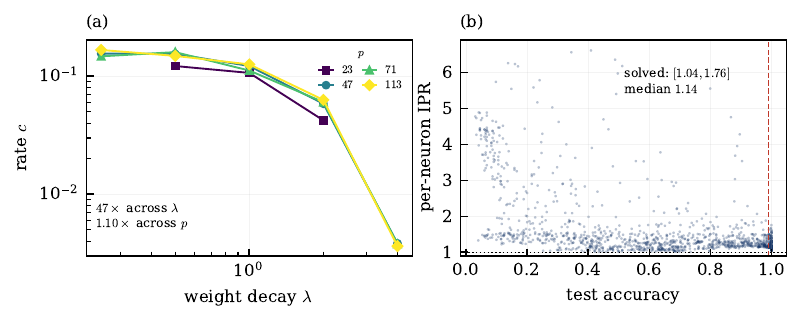}
\caption{The rate $c$ against weight decay for four group orders (a). The
curves for $p\ge47$ coincide to within measurement error while the rate itself
moves by a factor of $47$. Per neuron spectral participation ratio against
held out accuracy over the full grid (b), showing that rank one structure
holds only where the task is solved.}
\label{fig:fact}
\end{figure}

\section{The irreducible representation count does not control the loss}
\label{sec:irreps}

If the geometric variable is unavailable, the mechanistic one seems obvious.
Each neuron converges to a single irreducible representation under gradient
flow \citep{he2026spectral,he2026modular}, which makes the count of distinct
represented modes well defined, and it is the quantity we expected to organise
the data. It does not. Within a single group order the correlation is
convincing, returning $R^{2}=0.90$ at $p=47$. Across group orders it
collapses, the loss ranging over a factor of $2500$ at $\Keff\approx20$ and
$\Keff$ returning $R^{2}=0.590$ pooled against $R^{2}=0.917$ for width alone.
The within group correlation is an artefact of an occupancy process rather
than a measure of capacity: a zero parameter occupancy model accounts for
$R^{2}=0.907$ of the variance in $\Keff$. It is visible only across problem
sizes, since within a single size capacity and sampling are confounded by
construction. The claim delimits \citet{he2026modular} without disagreeing
with them.
Appendix~\ref{app:irreps} gives the measurements and a second finding, that
regularisation imposes the rank one structure of the neurons rather than the
architecture supplying it (Fig.~\ref{fig:fact}(b)).

\section{The rate factorises}
\label{sec:fact}

What remains is the rate. Weight decay is the natural candidate, since
Appendix~\ref{app:exact} shows the unconstrained frequency restricted solution
already achieves zero loss with four frequencies, so the constraint that binds
cannot be the availability of modes. Sweeping weight decay at fixed group order
moves the rate by a factor of $47$, from $0.171$ at weight decay $0.1$ to
$0.00364$ at weight decay $4$, monotone decreasing in the mean over group
orders once runs that fail to solve
the task are excluded. Retaining them produces a spurious maximum at
intermediate weight decay, because the three parameter form fitted to an
unsolved run places the recovered floor above the bulk of the data.
Appendix~\ref{app:fit} states the validity criterion, which rejects a fit when
the recovered floor exceeds three times the smallest observed loss.

Against that factor of $47$ the group order does almost nothing. This is the
test of the budget hypothesis of Sec.~\ref{sec:hyp}, Eq.~\eqref{eq:fact}. Over
four group orders and five weight decay strengths, tabulated in
Table~\ref{tab:fact}, the decomposition returns a spread in $g$ of a factor of
$1.097$ among $p\ge47$, against the factor of $47$ contributed by $f$, and
Fig.~\ref{fig:fact}(a) shows the four curves. The residual spread is not merely
small but unresolvable: in the regime where the floor is identified the
standard error of a three seed mean is $4.3\%$ and the observed spread across
group orders is also $4.3\%$, so group order dependence, if present, is below
what this experiment can detect (Appendix~\ref{app:fit}). Because the exponent
$\alpha$ is not identified, the factorisation must also be checked against the
choice of fixing it at unity. Repeating the fit at $\alpha\in\{0.75,\dots,2\}$
moves the rate by a factor of $214$ while the coefficient of variation across
group order stays between $4.0\%$ and $6.8\%$ (Table~\ref{tab:alpha}), so the
claim concerns the structure of the dependence, not a number.

The single exception is instructive. At $p=23$ the rate sits a factor of
$0.793$ below the others, a real deviation of $4.8$ standard errors, and that
group order also fails outright at low regularisation. Its training set
contains $264$ pairs against $6384$ at $p=113$. The factorisation is a
statement about the regime in which data are sufficient, and $p=23$ marks one
edge of it. There is a second edge in the same direction: raising the training
fraction from a half to four fifths moves the rate at both activations, and
under the quadratic one it carries the residual past zero
(Appendix~\ref{app:frac}).

Whether the factorisation belongs to the task or to the quadratic activation
is settled in part by repeating the sweep with ReLU. The functional form
transfers, the exponential family reaching $R^{2}=0.963$ against $0.869$ for
the best power law. The factorisation does not, since the rate differs between group
orders by $61\%$ on average, and truncation does not account for it. The
critical width tracks the discrepancy, in the pattern of Sec.~\ref{sec:hc},
where the quadratic activation itself departs at $p=23$ with $264$ training
pairs. The factorisation holds wherever data are plentiful relative to what the
architecture needs per mode, and ReLU reaches that boundary at a larger group
order. Supplying more data closes the gap (Appendix~\ref{app:frac}). Appendix~\ref{app:relu} gives the protocol and the
full comparison.

\begin{figure}[t]
\centering
\includegraphics[width=0.66\textwidth]{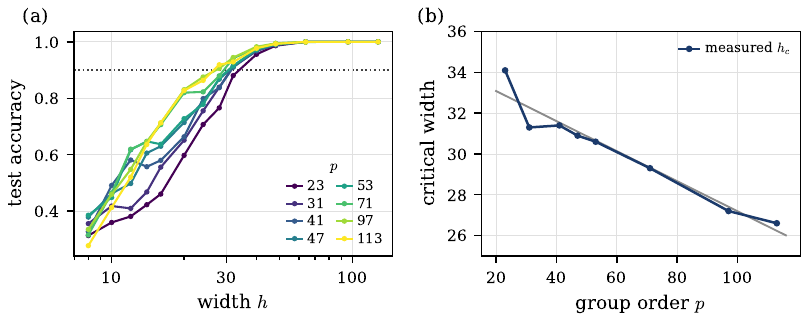}
\caption{Held out accuracy against width at weight decay $1$ for eight group
orders (a), and the resulting critical width against group order (b), with the
least squares line. The requirement falls as the group grows, where capacity
counting that assigns a fixed number of neurons per irreducible representation
requires it to rise.}
\label{fig:hc}
\end{figure}

\section{Critical width falls as the group grows}
\label{sec:hc}

The factorisation says that group order does not enter the rate. Whether it
enters anywhere is best asked of $\hc$, where a capacity argument would put it.
Defining $\hc$ as the width at which held out accuracy crosses $0.9$ by linear
interpolation, it falls from $34.1$ at $p=23$ to $26.6$ at $p=113$, a linear
fit against $p$ giving a slope of $-0.0738$ with $R^{2}=0.934$ as
Fig.~\ref{fig:hc}(b) shows. The requirement falls as the group grows, by
twenty two percent over a range of $4.9$ in the order of the group.

This is the opposite of what capacity counting predicts. Any account assigning
a fixed number of neurons to each irreducible representation makes $\hc$ grow
with $p$, and our own Proposition~\ref{prop:construct} is of that form, needing
$4(p-1)$ neurons at full frequency content, whereas the expressible class of a
monomial activation is fixed by its degree alone and does not grow with $p$
\citep{kam2026algebraic}. Such bounds give a width at which
the solution can be represented and not the width at which training finds it,
but the disagreement here is in sign as well as magnitude and cannot be
absorbed by a constant.

The training set explains the direction, not the representation theory. Pairs
grow as $p^{2}/2$ while frequencies grow as $p/2$, so larger groups are better
supplied with data per mode and overfit less, and \citet{tian2025} prove that $O(M\log M)$ samples
suffice on a group arithmetic task of order $M$, which grows more slowly than
the pairs the task supplies. This also accounts for the failure of $p=23$ at
low regularisation reported in Sec.~\ref{sec:fact}. An independent argument
reaches the same conclusion from the representation side, since
\citet{beyondnc2026} derive a critical regularisation strength scaling as the
inverse of the number of classes: larger groups should tolerate weaker weight
decay, which is what we observe.

\section{Discussion}

The power law form of neural scaling laws is a property of the data, not of
learning as such, and modular addition supplies neither ingredient the
derivations require. Its exact solution is a group orbit and not a sampled
manifold, its irreducible representations are equivalent under the symmetry of
the group and carry no preferred scale, and they combine coherently rather than
additively, since the logit is built by constructive interference of modes into
a peak. Exponential scaling is not
without precedent \citep{sorscher2022,liu2025superposition}, though there the
mechanism is a change in the effective distribution over structure rather than
a property of the task. Whether the form follows from the coherence is left
open: the stretched exponential of Appendix~\ref{app:exact} depends on group
order where the measured rate does not.

Both departures from the factorisation, the quadratic activation at $p=23$ and
ReLU at $p=47$, occur where the training set is smallest relative to the width
the architecture needs, and in both the fitted rate falls below the common
value. That predicts more data closes the deficit, and it does, by an amount
that tracks the size of the deficit at both activations
(Appendix~\ref{app:frac}). Data supply bounds the factorisation, not the
activation.

The most portable result is methodological. Three quantities that behave like
controlling variables within a single problem size turn out not to be
controlling variables at all: the intrinsic dimension, which is undefined on
group orbits yet returns a plausible number under perturbation; the aggregate
spectral occupancy, which tracks an occupancy process rather than capacity;
and the fitted power law exponent itself, which is stable enough within one
group order to be reported but moves by half again across them, where the
derivation that produces it gives no reason for movement. Each would have
survived a study at a single problem size. Wherever a representation carries a
group action, the estimators that summarise its geometry must be validated
against that action before their output is read as a dimension. The validation
is two checks, both available beforehand: the empirical distribution of the
ratio statistic, and the stability of the estimate under a change of probe
scale. The torus control of Appendix~\ref{app:orbit} passes both and the orbit
fails both.

The non abelian case is the natural continuation, with $\Kmax$ replaced by
$\sum_{\rho}d_{\rho}$: whether the factorisation survives that replacement
would say whether the dimensions of the representations enter the rate
\citep{chughtai2023,stander2024}.

\bibliography{refs_neurreps}

\appendix

\section{The frequency restricted solution and its loss}
\label{app:exact}

\begin{table}[h]
\caption{Notation.}
\label{tab:notation}
\centering
\begin{tabular}{ll}
\toprule
Symbol & Meaning \\
\midrule
$p$ & order of the cyclic group, prime \\
$\omega_{k}$ & $2\pi k/p$ \\
$\rho_{k}$ & real two dimensional irreducible representation at frequency $k$ \\
$\Kmax$ & $(p-1)/2$, number of available frequencies, equivalently of irreps \\
$S$ & subset of frequencies carried by a solution \\
$\Phi_{S}$, $X_{S}$ & embedding of Eq.~\eqref{eq:phiS} and its image \\
$h$ & hidden width; $N=3ph$ the parameter count \\
$\lambda$ & weight decay strength \\
$L(h)$, $\Linf$ & test cross entropy and its fitted floor \\
$A$, $c$, $\alpha$ & amplitude, rate and exponent of Eq.~\eqref{eq:law} \\
$\gamma$ & exponent of the matched power law, Eq.~\eqref{eq:powfloor} \\
$\beta_{k}$, $B$ & mode amplitudes and their $\ell_{2}$ budget \\
$\Keff$ & aggregate spectral participation ratio, Eq.~\eqref{eq:keff} \\
$\hc$ & width at which held out accuracy crosses $0.9$ \\
$d$ & intrinsic dimension as estimated by \citet{facco2017} \\
\bottomrule
\end{tabular}
\end{table}

This appendix works out what the exact algebraic solution costs when it is
restricted to a subset of the available irreducible representations and its
logits are held to a fixed norm. Four things are established. The architecture
of Eq.~\eqref{eq:model} represents the target logit exactly, and the explicit
construction that does so is far more expensive in width than what the trained
network finds. The resulting logit is a Dirichlet kernel whose behaviour at
full frequency content is exactly a discrete delta. Without a norm constraint
the loss reaches zero at very small mode counts, so mode availability is not
what limits the trained network. Under a norm constraint the loss becomes a
stretched exponential in the mode count, but with a range of validity narrower
than a naive expansion suggests and with an optimal amplitude assignment that
is not the uniform one.

\subsection{From the architecture to the logit}

Write $W_{1}=[U;V]$ with $U,V\in\mathbb{R}^{p\times h}$, so that the
preactivation of hidden unit $j$ on the input pair $(a,b)$ is $U_{aj}+V_{bj}$
and the logit assigned to class $c$ is
\begin{equation}
z_{c}(a,b) \;=\; \sum_{j=1}^{h}\big(U_{aj}+V_{bj}\big)^{2}\,W_{2,jc}.
\label{eq:logitraw}
\end{equation}
The quadratic activation in Eq.~\eqref{eq:logitraw} matters because the
square of a sum of two cosines contains a cross term at the sum frequency.
Writing $\varphi_{a}=\omega_{k}a$ and $\varphi_{b}=\omega_{k}b$, expanding
$(\cos\varphi_{a}+\cos\varphi_{b})^{2}$ produces
$2\cos\varphi_{a}\cos\varphi_{b}=\cos(\varphi_{a}-\varphi_{b})
+\cos(\varphi_{a}+\varphi_{b})$, so the
dependence on $a+b$ that the task requires is already present after one
nonlinearity. The difficulty is that the same expansion also produces
$\cos(\varphi_{a}-\varphi_{b})$ and the two squared terms, none of which
depend on $a+b$, and these must be removed.

They can be removed exactly. The following construction uses eight hidden
units per frequency and second layer weights taking values in
$\{0,\pm\tfrac14\}$.

\begin{proposition}[Exact representability]
\label{prop:construct}
Fix $S\subseteq\{1,\dots,\Kmax\}$ and amplitudes $\beta_{k}$. There is a choice
of $W_{1}$ and $W_{2}$ with $h=8|S|$ for which
\begin{equation}
z_{c}(a,b) \;=\; \sum_{k\in S}\beta_{k}\cos\!\big(\omega_{k}(a+b-c)\big)
\label{eq:target}
\end{equation}
holds identically on $\Zp\times\Zp$.
\end{proposition}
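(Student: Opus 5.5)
The plan is to build each target mode $\beta_{k}\cos(\omega_{k}(a+b-c))$ from eight hidden units and then stack the blocks over $k\in S$. Since $z_{c}$ in Eq.~\eqref{eq:logitraw} is additive over hidden units, blocks for different frequencies do not interact, and disjoint blocks give $h=8|S|$. So it suffices to treat a single frequency $k$, writing $\varphi_{a}=\omega_{k}a$, $\varphi_{b}=\omega_{k}b$, $\psi_{c}=\omega_{k}c$.

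The first step expands the target by the angle addition formula:
\begin{equation*}
\cos(\varphi_{a}+\varphi_{b}-\psi_{c})
=\cos(\varphi_{a}+\varphi_{b})\cos\psi_{c}+\sin(\varphi_{a}+\varphi_{b})\sin\psi_{c}.
\end{equation*}
Expanding once more, $\cos(\varphi_{a}+\varphi_{b})=\cos\varphi_{a}\cos\varphi_{b}-\sin\varphi_{a}\sin\varphi_{b}$ and $\sin(\varphi_{a}+\varphi_{b})=\sin\varphi_{a}\cos\varphi_{b}+\cos\varphi_{a}\sin\varphi_{b}$. The target is therefore a combination of the four separable products $g(a)\,g'(b)$ with $g,g'\in\{\cos\varphi,\sin\varphi\}$. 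The coefficients of these products depend only on the output class $c$, through $\pm\beta_{k}\cos\psi_{c}$ and $\beta_{k}\sin\psi_{c}$.

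The second step handles the unwanted terms the introduction of this appendix points to: the squares and the difference-frequency term. These are removed by polarisation, $(u+v)^{2}-(u-v)^{2}=4uv$. For each of the four products take two hidden units. The first has $U_{aj}=g(a)$, $V_{bj}=g'(b)$, and the second has $U_{aj}=g(a)$, $V_{bj}=-g'(b)$. Give them second-layer weights $+\tfrac14 w_{c}$ and $-\tfrac14 w_{c}$, where $w_{c}$ is the class coefficient of that product from the first step. The squared terms $g(a)^{2}$ and $g'(b)^{2}$ then cancel exactly within the pair, leaving $w_{c}\,g(a)g'(b)$. Four products times two units gives eight units.

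The third step sums the four pairs. By the first step this reproduces $\beta_{k}\cos(\omega_{k}(a+b-c))$ identically on $\Zp\times\Zp$, for every $c$, with no residual $\cos(\varphi_{a}-\varphi_{b})$ term: that term only ever appeared in the expansion of a single square, and the polarisation identity cancels squares exactly. Summing over $k\in S$ then gives Eq.~\eqref{eq:target}.

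I expect no real obstacle; the one point needing care is bookkeeping of the second-layer weights. The pattern within each pair is $\pm\tfrac14$, but the full entry $W_{2,jc}$ must carry the class-dependent factor $\beta_{k}\cos\psi_{c}$ or $\beta_{k}\sin\psi_{c}$, since the logit depends on $c$. I would state the construction as weights $\pm\tfrac14$ times these class factors (and zero off the block). This reading is forced: $W_{2}$ must carry all dependence on $c$, because the first layer sees only $(a,b)$.
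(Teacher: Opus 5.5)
Your construction is correct and is essentially the paper's. It uses the same eight preactivations $g(a)\pm g'(b)$ with $g,g'\in\{\cos,\sin\}$, the same polarisation identity, and second-layer weights $\pm\tfrac14\beta_k\cos\omega_k c$ and $\pm\tfrac14\beta_k\sin\omega_k c$; you organise it per product where the paper organises it per group. One correction to your side remark: polarisation does not remove $\cos(\varphi_a-\varphi_b)$, since $4\cos\varphi_a\cos\varphi_b$ still contains it, and the term cancels only because the $\cos\cos$ and $\sin\sin$ products enter with opposite signs, which your first step already guarantees.
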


\begin{proof}
Fix $k\in S$ and write $\varphi_{a}=\omega_{k}a$ and $\varphi_{b}=\omega_{k}b$
as above. Allocate four units carrying the preactivations
$\cos\varphi_{a}\pm\cos\varphi_{b}$ and $\sin\varphi_{a}\pm\sin\varphi_{b}$.
Squaring and combining with signs $+,-,-,+$ gives
\begin{align}
(\cos\varphi_{a}+\cos\varphi_{b})^{2}-(\cos\varphi_{a}-\cos\varphi_{b})^{2}
&= 4\cos\varphi_{a}\cos\varphi_{b},\\
(\sin\varphi_{a}+\sin\varphi_{b})^{2}-(\sin\varphi_{a}-\sin\varphi_{b})^{2}
&= 4\sin\varphi_{a}\sin\varphi_{b},
\end{align}
and subtracting the second line from the first leaves
$4\cos(\varphi_{a}+\varphi_{b})$. Every term that does not depend on
$a+b$ has cancelled. Allocate four further units carrying
$\cos\varphi_{a}\pm\sin\varphi_{b}$ and $\sin\varphi_{a}\pm\cos\varphi_{b}$,
whose squared differences give $4\cos\varphi_{a}\sin\varphi_{b}$ and
$4\sin\varphi_{a}\cos\varphi_{b}$, and whose sum is
$4\sin(\varphi_{a}+\varphi_{b})$. Assigning
second layer weights $\tfrac{1}{4}\beta_{k}\cos\omega_{k}c$ to the first group
and $\tfrac{1}{4}\beta_{k}\sin\omega_{k}c$ to the second and summing over $k$
produces
\begin{equation}
\sum_{k\in S}\beta_{k}\Big[\cos\big(\omega_{k}(a+b)\big)\cos\omega_{k}c
+\sin\big(\omega_{k}(a+b)\big)\sin\omega_{k}c\Big],
\end{equation}
which is Eq.~\eqref{eq:target} by the cosine subtraction formula.
\end{proof}

We have verified Eq.~\eqref{eq:target} numerically against this construction
at $p=47$ for $|S|=1$, $4$ and $23$, with agreement to $10^{-14}$.

The width Proposition~\ref{prop:construct} consumes is worth comparing
against measurement.
At $p=47$ with all frequencies retained it needs $8\Kmax=184$ hidden units,
whereas the critical width reported in Sec.~\ref{sec:hc} is $30.9$. The trained
network is therefore about six times more economical than the most obvious
exact solution, which is consistent with the finding of
Refs.~\citep{he2026spectral,he2026modular} that a single unit carries a single
irreducible representation and that the unwanted terms are suppressed by
incoherent cancellation across units rather than removed exactly. The
construction above should be read as a proof of representability rather than
as a model of what gradient descent finds. That the solutions selected by
training are the max margin ones, and that these use Fourier features on
algebraic tasks, is established by \citet{morwani2024}.

Which targets a quadratic network can reach at all is a prior question,
separate from the width reaching one of them consumes.
\citet{kam2026algebraic} settle it for the monomial family $\sigma(z)=z^{k}$
on roots of unity inputs, where the reachable functions span the $k+1$
characters whose frequency pair sums to $k$, and a target outside that span
cannot be fitted even on the training set. Proposition~\ref{prop:construct}
sits on the far side of that question. The target of Eq.~\eqref{eq:target} is
reachable, and what the width buys is the removal of the terms the square
produces alongside it.

\subsection{The logit as a Dirichlet kernel}

Everything downstream depends on the shape of Eq.~\eqref{eq:target} as a
function of the class index, so we record it in closed form. Taking
$S=\{1,\dots,K\}$ with unit amplitudes and writing $s=a+b-c \bmod p$, the
logit is the partial sum
\begin{equation}
D_{K}(s) \;=\; \sum_{k=1}^{K}\cos(\omega_{k}s).
\label{eq:dirichlet0}
\end{equation}

\begin{lemma}[Closed form]
\label{lem:dirichlet}
For $s\not\equiv0 \bmod p$,
\begin{equation}
D_{K}(s) \;=\; -\frac{1}{2}
\;+\;\frac{\sin\!\big((2K+1)\pi s/p\big)}{2\sin(\pi s/p)},
\label{eq:dirichlet}
\end{equation}
while $D_{K}(0)=K$.
\end{lemma}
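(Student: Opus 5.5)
The plan is to derive the closed form from the classical Dirichlet kernel identity by symmetrising the cosine sum into a sum of complex exponentials and summing the resulting geometric series. Write $\theta=\omega_{1}s=2\pi s/p$, so that $\omega_{k}s=k\theta$. Since $\cos$ is even,
\begin{equation*}
1+2D_{K}(s) \;=\; \sum_{k=-K}^{K} e^{ik\theta},
\end{equation*}
and the task reduces to evaluating this symmetric sum. The value at $s\equiv0$ is immediate and will be disposed of first: every term $\cos(\omega_{k}\cdot 0)=1$, so $D_{K}(0)=K$. This requires no hypothesis on $K$ beyond $K\ge1$.

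For $s\not\equiv0\bmod p$, I first check that the geometric series has ratio different from one. The ratio is $e^{i\theta}$, which equals $1$ only if $p\mid s$. That is excluded, so the denominator $\sin(\theta/2)=\sin(\pi s/p)$ does not vanish. Primality of $p$ is not needed here, only $s\not\equiv0$. I then factor out $e^{-iK\theta}$ and sum $\sum_{j=0}^{2K}e^{ij\theta}=(e^{i(2K+1)\theta}-1)/(e^{i\theta}-1)$. Multiplying numerator and denominator by half-angle factors gives
\begin{equation*}
\sum_{k=-K}^{K} e^{ik\theta}=\frac{e^{i(2K+1)\theta/2}-e^{-i(2K+1)\theta/2}}{e^{i\theta/2}-e^{-i\theta/2}}=\frac{\sin\big((2K+1)\theta/2\big)}{\sin(\theta/2)}.
\end{equation*}
Substituting $\theta/2=\pi s/p$, then subtracting $1$ and halving, yields Eq.~\eqref{eq:dirichlet}.

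An alternative that avoids complex numbers is telescoping. Multiply $D_{K}(s)$ by $2\sin(\theta/2)$ and use $2\sin(\theta/2)\cos(k\theta)=\sin((k+\tfrac12)\theta)-\sin((k-\tfrac12)\theta)$. The sum then collapses to $\sin((K+\tfrac12)\theta)-\sin(\theta/2)$, and dividing by $2\sin(\theta/2)$ gives the same result. I would present whichever is shorter, probably the telescoping version, since it matches the real trigonometric style of Proposition~\ref{prop:construct}.

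There is no real obstacle; the only points needing care are bookkeeping. First, the reduction $s=a+b-c\bmod p$ is harmless, because $\cos(\omega_{k}s)$ depends on $s$ only modulo $p$ and so does the right-hand side. Shifting $s$ by $p$ multiplies both $\sin((2K+1)\pi s/p)$ and $\sin(\pi s/p)$ by $(-1)^{2K+1}$ and $-1$ respectively, and these signs cancel. Second, the nonvanishing of the denominator must be justified as above.
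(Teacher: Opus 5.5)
Your proof is correct, and the telescoping version you say you would present is exactly the paper's argument: set $\theta=2\pi s/p$, note $\sin(\theta/2)\neq0$, multiply by $2\sin(\theta/2)$, telescope to $\sin((K+\tfrac12)\theta)-\sin(\theta/2)$, and divide. Your geometric-series route via $1+2D_K(s)=\sum_{k=-K}^{K}e^{ik\theta}$ is an equally valid standard alternative, and your check that the right-hand side is invariant under $s\mapsto s+p$ is a well-definedness point the paper leaves implicit.
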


\begin{proof}
Write $\theta=2\pi s/p$, which is not a multiple of $2\pi$, so
$\sin(\theta/2)\ne0$. The product to sum identity gives
$2\sin(\theta/2)\cos(k\theta)=\sin\big((k+\tfrac12)\theta\big)
-\sin\big((k-\tfrac12)\theta\big)$. Summing over $k$ from $1$ to $K$ makes the
right hand side telescope to
$\sin\big((K+\tfrac12)\theta\big)-\sin(\theta/2)$. Dividing by
$2\sin(\theta/2)$ and substituting $\theta=2\pi s/p$ gives
Eq.~\eqref{eq:dirichlet}. The value at $s=0$ is immediate from
Eq.~\eqref{eq:dirichlet0}.
\end{proof}

The case of full frequency content is special and exact.

\begin{corollary}[Delta kernel]
\label{cor:delta}
Let $p$ be odd and $\Kmax=(p-1)/2$. Then $D_{\Kmax}(0)=\Kmax$, and
$D_{\Kmax}(s)=-\tfrac12$ for every $s\not\equiv0$.
\end{corollary}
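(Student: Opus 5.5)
The plan is to specialise Lemma~\ref{lem:dirichlet} to $K=\Kmax=(p-1)/2$. The value at $s=0$ needs no work: every term of Eq.~\eqref{eq:dirichlet0} equals $\cos 0=1$, so $D_{\Kmax}(0)=\Kmax$. For $s\not\equiv 0 \bmod p$, substituting $2\Kmax+1=p$ into Eq.~\eqref{eq:dirichlet} turns the numerator into $\sin(p\cdot\pi s/p)=\sin(\pi s)$. This vanishes because $s$ is an integer. The denominator $2\sin(\pi s/p)$ is nonzero, since $s/p$ is not an integer; this is exactly the hypothesis under which Lemma~\ref{lem:dirichlet} was stated. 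The quotient is therefore zero and $D_{\Kmax}(s)=-\tfrac12$.

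There is no real obstacle. The only point to watch is that the closed form is applied with $s$ a residue representative in $\{1,\dots,p-1\}$, or indeed any integer not divisible by $p$. Since $D_K$ depends on $s$ only through $\cos(\omega_k s)$, which is $p$-periodic, the choice of representative does not matter.

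As an independent check I would also give the character-sum argument, which avoids the closed form. Because $p$ is odd, the frequencies $k$ and $p-k$ pair off over $\{1,\dots,p-1\}$, and $\cos(\omega_{p-k}s)=\cos(\omega_k s)$. Hence
\[
2D_{\Kmax}(s)=\sum_{k=1}^{p-1}\cos(\omega_k s)=\mathrm{Re}\sum_{k=0}^{p-1}e^{2\pi i ks/p}-1 .
\]
The full geometric sum over all $k$ vanishes for $s\not\equiv0$, since $e^{2\pi i s/p}$ is then a nontrivial $p$-th root of unity. This gives $2D_{\Kmax}(s)=-1$ and recovers $D_{\Kmax}(s)=-\tfrac12$, confirming the corollary.
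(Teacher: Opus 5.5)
Your argument is correct and is the same as the paper's proof: substituting $2\Kmax+1=p$ into the closed form of Lemma~\ref{lem:dirichlet} turns the numerator into $\sin(\pi s)$, which vanishes for every integer $s$, and $D_{\Kmax}(0)=\Kmax$ follows directly from the definition. Your second check, pairing $k$ with $p-k$ and using the vanishing sum of the $p$-th roots of unity, is also valid, and it mirrors the character-orthogonality argument the paper uses elsewhere.
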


\begin{proof}
At $K=\Kmax$ the numerator of Eq.~\eqref{eq:dirichlet} is
$\sin\big((2\Kmax+1)\pi s/p\big)=\sin(\pi s)$, which vanishes for every
integer $s$.
\end{proof}

The kernel at full frequency content is thus a discrete delta sitting on a
constant background, and the separation between the correct class and every
competitor is exactly $\Kmax+\tfrac12$ per unit amplitude. Numerical
evaluation at $p=47$ agrees to $2\times10^{-14}$. This is the algebraic
statement behind the simplex geometry established in
Appendix~\ref{app:orbit}, since a kernel that is constant off the diagonal is
precisely what makes all pairwise inner products equal.

\subsection{Cross entropy}

Because the group acts transitively on the input pairs, the loss does not
depend on which pair is presented, and the cross entropy for the solution
\eqref{eq:target} with uniform amplitude $\beta$ is
\begin{equation}
L(K,\beta) \;=\; \log\!\Big(1+\sum_{s\ne0}
e^{-\beta\left(K-D_{K}(s)\right)}\Big).
\label{eq:ce}
\end{equation}
Two regimes follow according to whether $\beta$ is free.

\subsubsection{Unconstrained amplitude}

Suppose $\beta$ may grow without bound. Every exponent in
Eq.~\eqref{eq:ce} is then driven to $-\infty$ provided $K-D_{K}(s)>0$ for all
$s\ne0$, that is provided the kernel has a strict maximum at the origin. By
Lemma~\ref{lem:dirichlet} the second term of $D_{K}(s)$ is bounded by
$1/(2\sin(\pi/p))$ in absolute value, so the condition holds for every $K\ge1$
once $p$ is large enough, and it holds for every $K\ge1$ at the values of $p$
studied here. Hence $L\to0$ for any nonempty frequency set.

The rate at which this happens is fast. Optimising $\beta$ at each $K$ for
$p=47$ drives the loss below $10^{-12}$ from $K=4$ onwards, whereas trained
networks whose measured mode count is comparable sit near $L\approx3$. The
gap between these two numbers is the reason Sec.~\ref{sec:fact} looks to the
norm budget rather than to mode availability for the binding constraint.

\subsubsection{Constrained amplitude}

Weight decay does not permit unbounded logits, so the relevant question is how
well a fixed budget can be spent. Allow the amplitudes to differ and write
$\boldsymbol{\beta}=(\beta_{k})_{k\in S}$ subject to $\|\boldsymbol{\beta}\|_{2}=B$. The
logit separation between the correct class and the competitor at offset $s$ is
$\sum_{k\in S}\beta_{k}\big(1-\cos(\omega_{k}s)\big)$, so the quantity that
controls the loss is the worst case separation
\begin{equation}
\Delta_{S}(\boldsymbol{\beta}) \;=\; \min_{s\ne0}\
\sum_{k\in S}\beta_{k}\big(1-\cos(\omega_{k}s)\big),
\label{eq:gapdef}
\end{equation}
and Eq.~\eqref{eq:ce} is bounded by
$L\le\log\big(1+(p-1)e^{-\Delta_{S}(\boldsymbol{\beta})}\big)$ with equality when a
single competitor dominates. Choosing the best allocation of a fixed budget is
therefore the max-min problem
$\Delta^{*}_{S}(B)=\max_{\|\boldsymbol{\beta}\|=B}\Delta_{S}(\boldsymbol{\beta})$.

At full frequency content the answer is uniform allocation, and
Corollary~\ref{cor:delta} makes it exact.

\begin{proposition}[Optimal allocation]
\label{prop:uniform}
Let $S=\{1,\dots,\Kmax\}$. The uniform allocation
$\beta_{k}=B/\sqrt{\Kmax}$ is optimal, attaining
\begin{equation}
\Delta^{*}_{S}(B) \;=\; \frac{B\,(\Kmax+\tfrac12)}{\sqrt{\Kmax}}
\;=\; \frac{B\,p}{\sqrt{2(p-1)}},
\label{eq:deltastar}
\end{equation}
and the loss satisfies
\begin{equation}
L \;=\; \log\!\Big(1+(p-1)\,
\exp\!\Big[-B\,\frac{\Kmax+\tfrac12}{\sqrt{\Kmax}}\Big]\Big)
\;\simeq\; (p-1)\,e^{-B\sqrt{\Kmax}}
\label{eq:fullK}
\end{equation}
for large $B$.
\end{proposition}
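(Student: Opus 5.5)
The plan is to prove optimality by a matching pair of bounds: an averaging upper bound on $\Delta_{S}(\boldsymbol{\beta})$ that holds for every allocation, and an exact evaluation of the uniform allocation via Corollary~\ref{cor:delta} that meets it.

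\emph{Upper bound.} A minimum over $s\ne0$ is at most the mean over $s\ne0$, so for any $\boldsymbol{\beta}$
\[
\Delta_{S}(\boldsymbol{\beta}) \;\le\; \frac{1}{p-1}\sum_{s=1}^{p-1}\sum_{k=1}^{\Kmax}\beta_{k}\big(1-\cos\omega_{k}s\big).
\]
For $1\le k\le\Kmax$ we have $k\not\equiv0 \bmod p$, so $\sum_{s=0}^{p-1}\cos\omega_{k}s=0$ and therefore $\sum_{s=1}^{p-1}\cos\omega_{k}s=-1$. Each inner sum over $s$ then equals $(p-1)+1=p$, and
\[
\Delta_{S}(\boldsymbol{\beta})\le \frac{p}{p-1}\sum_{k}\beta_{k}.
\]
Cauchy--Schwarz with $\|\boldsymbol{\beta}\|_{2}=B$ gives $\sum_{k}\beta_{k}\le\sqrt{\Kmax}\,B$; this holds for amplitudes of either sign, so no positivity assumption is needed. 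Using $p-1=2\Kmax$, the bound becomes
\[
\Delta_{S}(\boldsymbol{\beta})\le \frac{pB}{2\sqrt{\Kmax}}=\frac{Bp}{\sqrt{2(p-1)}}=\frac{B(\Kmax+\tfrac12)}{\sqrt{\Kmax}}.
\]

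\emph{Attainment.} For $\beta_{k}=B/\sqrt{\Kmax}$ the separation at offset $s$ is $(B/\sqrt{\Kmax})\big(\Kmax-D_{\Kmax}(s)\big)$. Corollary~\ref{cor:delta} gives $D_{\Kmax}(s)=-\tfrac12$ for every $s\ne0$, so the separation is the constant $B(\Kmax+\tfrac12)/\sqrt{\Kmax}$. This equals the upper bound, which proves Eq.~\eqref{eq:deltastar}. Equality in both steps, namely all separations equal and $\boldsymbol{\beta}$ parallel to the all-ones vector, also shows the uniform allocation is the unique maximiser.

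\emph{Loss.} By the transitivity argument behind Eq.~\eqref{eq:ce}, the loss is $\log\big(1+\sum_{s\ne0}e^{-\mathrm{sep}(s)}\big)$. Because every competitor has the same separation $\Delta^{*}$, the bound $L\le\log\big(1+(p-1)e^{-\Delta^{*}}\big)$ holds with equality, which is the first expression in Eq.~\eqref{eq:fullK}. For large $B$ we use $\log(1+x)\sim x$ together with $\Delta^{*}=B\sqrt{\Kmax}+B/(2\sqrt{\Kmax})$.

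The only delicate point, and the one I expect to be the main obstacle, is how to read the final $\simeq$. The exact leading behaviour is $(p-1)e^{-B\sqrt{\Kmax}}e^{-B/(2\sqrt{\Kmax})}$, so dropping the second factor is an equivalence at the level of the exponential rate in $\sqrt{\Kmax}$. That is, $\log L\sim -B\sqrt{\Kmax}$ for large $B$ and large $\Kmax$; it is not a ratio tending to $1$ at fixed $p$. I would state it in that form and note that the correction is subleading whenever $\Kmax\gg1$.
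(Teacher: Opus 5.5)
Your proof is correct and follows the paper's argument step for step. You obtain attainment through Corollary~\ref{cor:delta}, use the column sum identity $\sum_{s\ne0}(1-\cos\omega_{k}s)=p$ to bound the minimum by the mean, apply Cauchy--Schwarz for the matching upper bound, and get the loss by substituting the common separation into Eq.~\eqref{eq:ce}. Your caveat on the final $\simeq$ is more careful than the paper's treatment, which substitutes $\Delta^{*}$ without comment: the dropped factor $e^{-B/(2\sqrt{\Kmax})}$ does not tend to $1$ as $B\to\infty$ at fixed $p$, so the approximation holds only at the level of the exponential rate, with a relative correction of $1/(2\Kmax)$ in the exponent.
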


\begin{proof}
Write $M_{sk}=1-\cos(\omega_{k}s)$ for $s\ne0$ and $k\in S$, so that
$\Delta_{S}(\boldsymbol{\beta})=\min_{s\ne0}(M\boldsymbol{\beta})_{s}$.

Uniform allocation attains the stated value. Setting $\beta_{k}=\beta$ for all
$k$ gives $(M\boldsymbol{\beta})_{s}=\beta\big(\Kmax-D_{\Kmax}(s)\big)$, which by
Corollary~\ref{cor:delta} equals $\beta(\Kmax+\tfrac12)$ for every $s\ne0$.
The minimum is therefore attained simultaneously at all $p-1$ competitors, and
substituting $\beta=B/\sqrt{\Kmax}$ with $\Kmax+\tfrac12=p/2$ gives
Eq.~\eqref{eq:deltastar}.

No allocation does better. Since $\sum_{s=0}^{p-1}\cos(\omega_{k}s)=0$ for
$k\not\equiv0$, we have $\sum_{s\ne0}\cos(\omega_{k}s)=-1$ and hence
$\sum_{s\ne0}M_{sk}=(p-1)+1=p$ for every $k$. Summing the components of
$M\boldsymbol{\beta}$ therefore gives $p\sum_{k}\beta_{k}$, so the minimum component
obeys
\begin{equation}
\Delta_{S}(\boldsymbol{\beta}) \;\le\; \frac{p}{p-1}\sum_{k\in S}\beta_{k}
\;\le\; \frac{p}{p-1}\sqrt{\Kmax}\,\|\boldsymbol{\beta}\|_{2}
\;=\; \frac{B\,p}{\sqrt{2(p-1)}},
\end{equation}
the second step by Cauchy--Schwarz and the last by
$\Kmax=(p-1)/2$. The bound coincides with the value attained above, and
Cauchy--Schwarz is tight only for uniform $\boldsymbol{\beta}$.

Substituting $\Delta^{*}$ into Eq.~\eqref{eq:ce}, with all $p-1$ competitors
contributing equally, gives Eq.~\eqref{eq:fullK}.
\end{proof}

Since $\Kmax=(p-1)/2$, Eq.~\eqref{eq:fullK} says that at full frequency
content the loss falls as $\exp(-B\sqrt{p/2})$ up to the prefactor. Holding a
small target loss $L$ fixed as the group grows therefore requires
$B\simeq\sqrt{2}\,\log\!\big((p-1)/L\big)/\sqrt{p}$, which decreases with
$p$. The measured Frobenius norm of $W_{1}$ does the opposite, growing from
about $46$ at $p=47$ to about $77$ at $p=113$. The two are not the same
quantity, since $B$ is the norm of the coefficient vector in the logit
expansion while $\|W_{1}\|_{F}$ also carries the width and the parametrisation
of the construction, but the comparison gives no support to the budget account
and we do not claim it does.

For proper subsets the uniform allocation is not optimal, and the discrepancy
is not small. Solving Eq.~\eqref{eq:gapdef} numerically at $p=47$ by
Nelder--Mead from twelve random starts gives worst case separations exceeding
the uniform value by $25\%$ at $K=4$ and by $28\%$ at $K=10$, while at
$K=\Kmax$ the optimiser recovers the uniform allocation to within $1.6\%$, as
Proposition~\ref{prop:uniform} requires. The reason is that the constraint set
loses its symmetry once frequencies are removed, so the coefficients
$1-\cos(\omega_{k}s)$ no longer take a common value and the budget is better
spent on the frequencies that separate the nearest competitor.

With that caveat recorded, the uniform allocation remains the natural
reference because it is what a network with no preference among its retained
modes would produce. Under it, $\beta=B/\sqrt{K}$ and the separation is
governed by the dominant competitor. Numerically the dominant competitor is
$s=1$ for every $K$ below $\Kmax$ at $p=47$, and expanding
$\cos(2\pi k/p)=1-2\pi^{2}k^{2}/p^{2}+O(k^{4}/p^{4})$ gives
\begin{equation}
K-D_{K}(1) \;=\; \frac{2\pi^{2}}{p^{2}}\sum_{k=1}^{K}k^{2}
+O\!\left(\frac{K^{5}}{p^{4}}\right)
\;=\; \frac{2\pi^{2}}{3}\,\frac{K^{3}}{p^{2}}
+O\!\left(\frac{K^{2}}{p^{2}}\right)
+O\!\left(\frac{K^{5}}{p^{4}}\right),
\end{equation}
so that
\begin{equation}
\Delta_{K}(B) \;\simeq\; \frac{2\pi^{2}}{3}\,\frac{B\,K^{5/2}}{p^{2}} .
\label{eq:gap}
\end{equation}
The loss is then a stretched exponential in the mode count with exponent
$5/2$ and an explicit $p^{-2}$ prefactor.

The range over which Eq.~\eqref{eq:gap} is usable is narrower than the
derivation suggests, and it is worth being explicit about where it fails.
Table~\ref{tab:kexp} compares the expansion against the exact
$K-D_{K}(1)$ at $p=47$. Agreement is good below $K=12$, which is roughly half
of $\Kmax$, and deteriorates rapidly above it. By $K=20$ the expansion
overstates the separation by $35\%$, and at $K=\Kmax$ it gives $36.2$ against
the exact value $\Kmax+\tfrac12=23.5$ supplied by
Corollary~\ref{cor:delta}. The cubic growth cannot continue because the
separation saturates once the kernel becomes a delta.

\begin{table}[h]
\caption{Exact separation from the dominant competitor against its small
$K$ expansion, at $p=47$ where $\Kmax=23$.}
\label{tab:kexp}
\centering
\begin{tabular}{lcccccc}
\toprule
$K$ & $2$ & $8$ & $12$ & $16$ & $20$ & $23$ \\
\midrule
$K-D_{K}(1)$ exact & $0.045$ & $1.710$ & $5.052$ & $10.47$ & $17.58$ & $23.50$ \\
$2\pi^{2}K^{3}/3p^{2}$ & $0.024$ & $1.525$ & $5.147$ & $12.20$ & $23.83$ & $36.24$ \\
\bottomrule
\end{tabular}
\end{table}

Evaluating Eq.~\eqref{eq:ce} numerically without the expansion at $p=47$ and
fitting $\log L$ against $K$ gives $R^{2}$ between $0.981$ and $0.995$,
against $0.826$ to $0.874$ for a power law in $K$, with decay constants
$0.220$, $0.347$ and $0.606$ at $B=1.5$, $2.0$ and $3.0$. The functional form
that the exact solution produces under a budget is therefore exponential
rather than polynomial in the mode count, which is the same qualitative
statement that Sec.~\ref{sec:exp} makes about width.

\subsection{What the calculation does not explain}

The agreement in functional form should not be mistaken for a derivation of
the measurement. Equation~\eqref{eq:gap} carries an explicit $p^{-2}$ and
Eq.~\eqref{eq:fullK} carries an explicit $\sqrt{\Kmax}$, so both predict a
rate that varies strongly with group order. The measured rate does not, as
Sec.~\ref{sec:fact} establishes over a range of $4.9$ in $p$.

Three assumptions stand between the calculation and the experiment, and each
fails in a way that is documented elsewhere in this paper. The calculation
treats the mode count $K$ as the independent variable, whereas the quantity
that is swept is width, and Appendix~\ref{app:coupon} shows that the relation
between them is an occupancy law rather than a proportionality. The
calculation holds the budget fixed while $K$ varies, whereas the measured
Frobenius norm saturates with width, so that the amplitude per mode falls as
width rises rather than staying constant. The calculation assumes a single
allocation of the budget across modes, whereas the numerical max-min solution
above shows that the optimal allocation depends on $K$ and departs from
uniform by a quarter at small mode counts.

A derivation that respects all three would need to describe how gradient
descent under weight decay distributes a saturating norm over a mode set that
it is simultaneously enlarging. We do not have that description, and we record
its absence rather than presenting the calculation of this appendix as
an explanation of Sec.~\ref{sec:fact}.

\section{Degeneracy of the two nearest neighbour estimator on a group orbit}
\label{app:orbit}

Section~\ref{sec:manifold} claims that the intrinsic dimension is not merely
mismeasured on the algebraic solution but undefined. This appendix supplies
the argument in four parts. The estimator is stated together with the
assumption it rests on and validated on manifolds of known dimension. The
assumption is then shown to fail on any group orbit, in a way that is
structural rather than statistical. The geometry at full frequency content is
identified as a regular simplex, and the sense in which partial frequency sets
fall short of that is made precise. Finally the behaviour under a symmetry
breaking perturbation is characterised, and the absence of a scale free
plateau is established numerically over a factor of thirty in the probe scale.

\subsection{The estimator and its assumption}

Let $X=\{x_{1},\dots,x_{N}\}\subset\mathbb{R}^{m}$ and write $r_{1}(i)$ and
$r_{2}(i)$ for the distances from $x_{i}$ to its first and second nearest
neighbours. The estimator of \citet{facco2017}, which refines the maximum likelihood
construction of \citet{levina2004} and has been applied to the internal
representations of trained networks by \citet{ansuini2019}, works with the
ratio
\begin{equation}
\mu_{i} \;=\; \frac{r_{2}(i)}{r_{1}(i)} \;\ge\; 1 .
\label{eq:mu}
\end{equation}
The ratio \eqref{eq:mu} is useful because of the following observation. Suppose the points are
drawn independently from a density that is constant on the scale of
$r_{2}$, on a manifold of dimension $d$. The probability that the ball of
radius $r$ about $x_{i}$ contains no other point falls as
$\exp(-\rho\, v_{d}r^{d})$ with $v_{d}$ the volume of the unit $d$ ball and
$\rho$ the local density, so the first two neighbour distances have the joint
density of the first two order statistics of a Poisson process with intensity
$\rho\, v_{d}\,d\,r^{d-1}$. Changing variables to $\mu$ removes both $\rho$
and $v_{d}$ and leaves
\begin{equation}
\Pr(\mu_{i}\le\mu) \;=\; 1-\mu^{-d},
\qquad \mu\ge1 ,
\label{eq:pareto}
\end{equation}
a Pareto law whose shape parameter is the dimension. The cancellation of the
density is what makes the estimator local and free of a bandwidth choice.
Taking logarithms of the survival function gives a line through the origin,
\begin{equation}
-\log\big(1-F(\mu_{i})\big) \;=\; d\,\log\mu_{i},
\label{eq:regress}
\end{equation}
and $d$ is recovered as the slope, in practice after discarding the largest
ten percent of the $\mu_{i}$ where the assumption of constant density is
worst.

The estimator behaves as advertised on manifolds whose dimension is known.
Table~\ref{tab:idcal} reports our own calibration. Accuracy is good up to
$d=5$ and degrades slowly above it, which is the usual behaviour of nearest
neighbour estimators as the dimension grows. Small sample size is not the
obstacle at the sizes relevant here. A two torus sampled at only $47$ points,
which is the number of token embeddings available at $p=47$, returns
$2.01\pm0.34$ over forty draws, against $2.01\pm0.05$ at $2000$ points.

\begin{table}[h]
\caption{Calibration of the estimator. Linear manifolds are $d$ dimensional
subspaces embedded in $\mathbb{R}^{40}$, sampled at $2000$ points. Torus
figures are means over independent draws with the standard deviation across
draws.}
\label{tab:idcal}
\centering
\begin{tabular}{lcccccc}
\toprule
true $d$ & $1$ & $2$ & $3$ & $5$ & $8$ & torus, $2$ \\
\midrule
estimate & $0.98$ & $1.94$ & $2.99$ & $5.03$ & $7.50$ & $2.01\pm0.05$ \\
\bottomrule
\end{tabular}
\end{table}

Equation~\eqref{eq:pareto} requires that the points be a sample from a
density. A group orbit is not.

\subsection{Degeneracy on an orbit}

\begin{proof}[of Proposition~\ref{prop:orbit}]
Let $x,y\in X$. Transitivity gives $g\in G$ with $y=gx$, and because $g$ acts
by isometries the map $z\mapsto gz$ is a bijection of $X$ preserving all
pairwise distances. The multiset of distances from $y$ to $X\setminus\{y\}$ is
therefore the image under $g$ of the multiset of distances from $x$ to
$X\setminus\{x\}$, and the two multisets coincide. Their ordered elements
agree in particular, so $r_{1}(x)=r_{1}(y)$ and $r_{2}(x)=r_{2}(y)$, whence
$\mu_{x}=\mu_{y}$. Since $x$ and $y$ were arbitrary the empirical distribution
of $\mu$ is a point mass at a single value $\mu_{0}$.

The regression \eqref{eq:regress} has slope
$\hat{d}=\sum_{i}x_{i}y_{i}/\sum_{i}x_{i}^{2}$ with
$x_{i}=\log\mu_{i}$. Every $x_{i}$ equals $\log\mu_{0}$, so the design carries
no variation. If $\mu_{0}=1$ then $x_{i}=0$ for all $i$ and the ratio is
$0/0$. If $\mu_{0}>1$ then the empirical distribution function takes a
different value at each index while the abscissa does not, so the fitted slope
is set by the plotting convention used to define $F$ rather than by the
geometry.
\end{proof}

For the solutions of interest the degenerate case is the first one, and this
is stronger than the proposition requires.

\begin{lemma}[Coincident neighbour shells]
\label{lem:coincide}
Let $S\subseteq\{1,\dots,\Kmax\}$ be nonempty and let $\Phi_{S}$ be the
embedding of Eq.~\eqref{eq:phiS}. Then $r_{1}=r_{2}$ at every point of
$X_{S}$, so $\mu_{0}=1$.
\end{lemma}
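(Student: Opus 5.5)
By Proposition~\ref{prop:orbit}, $\mu$ is constant on the orbit, so it suffices to work at the single point $\Phi_{S}(0)$. There we will exhibit two distinct points at the minimal distance. The natural candidates are the images of $m$ and $-m$: each pair $\pm m$ of nonzero residues is a reflection-symmetric pair, and distances from $\Phi_{S}(0)$ depend on $m$ only through cosines.

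First we compute the squared distance in closed form. Each frequency block contributes $|e^{i\omega_k m}-1|^2 = 2-2\cos(\omega_k m)$, so
\begin{equation}
\|\Phi_{S}(m)-\Phi_{S}(0)\|^{2} \;=\; 2\sum_{k\in S}\big(1-\cos\omega_{k}m\big)\;=:\;\delta(m).
\end{equation}
Because cosine is even, $\delta(m)=\delta(-m)=\delta(p-m)$. For $m\not\equiv0$, $\delta(m)>0$: since $p$ is prime and $k\in S$ lies in $\{1,\dots,\Kmax\}$, we have $\omega_k m\notin 2\pi\mathbb{Z}$. This is the freeness already noted after Eq.~\eqref{eq:action}.

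Next we pair up the nonzero residues. Since $p$ is odd, $m\ne -m \bmod p$ for every $m\ne0$, so $\{1,\dots,p-1\}$ splits into $\Kmax$ disjoint pairs $\{m,p-m\}$. Within each pair the two distances from the base point are equal, and the two points themselves are distinct because $\Phi_{S}$ is injective. Let $m^{*}$ be a minimiser of $\delta$ over nonzero residues. Then $m^{*}$ and $p-m^{*}$ are two distinct points both at distance $\sqrt{\delta(m^{*})}$, which is the minimum. Hence the first and second nearest neighbour distances coincide: $r_{1}=r_{2}=\sqrt{\delta(m^{*})}$. Transporting this by the isometric action of Eq.~\eqref{eq:action} gives $r_{1}=r_{2}$ at every point of $X_{S}$, so $\mu_{0}=1$.

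No step is a real obstacle. The one place needing care is the bookkeeping of $r_{2}$ under ties: we take $r_{2}$ to be the distance to the second-nearest \emph{point}, counted with multiplicity, so that a tie at the minimum forces $r_{2}=r_{1}$. If the minimum of $\delta$ is attained by more than one pair, the conclusion only strengthens. We should also state explicitly that $p$ odd is what makes the partners $m$ and $-m$ distinct. At $p=2$ the lemma would fail, but that case is excluded since $\Kmax\ge1$ requires $p\ge3$.
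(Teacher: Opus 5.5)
Your proof is correct and follows the paper's argument. In both, the squared distance depends only on the offset through the even function $2\sum_{k\in S}(1-\cos\omega_{k}t)$, so the offsets $\pm t$ pair up and the minimal distance is attained at least twice. The paper phrases this as every distance from a point occurring with even multiplicity, and it works at an arbitrary point directly, so your reduction to $\Phi_{S}(0)$ via Proposition~\ref{prop:orbit} is unnecessary but harmless. Your explicit checks that $p$ odd makes $t\not\equiv -t$ and that $\Phi_{S}$ is injective fill in steps the paper leaves implicit.
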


\begin{proof}
Squared distances depend only on the difference of the arguments,
\begin{equation}
\|\Phi_{S}(n)-\Phi_{S}(n')\|^{2}
\;=\; 2\sum_{k\in S}\big(1-\cos\omega_{k}(n-n')\big)
\;\equiv\; \psi(n-n'),
\end{equation}
and $\psi$ is even in its argument modulo $p$ because the cosine is. Hence
$\psi(t)=\psi(-t)$ for every $t\ne0$, so the neighbours at offsets $+t$ and
$-t$ lie at equal distance and every distance from a given point occurs with
even multiplicity. The two smallest are therefore equal.
\end{proof}

Numerically at $p=47$ the ratio $r_{2}/r_{1}$ equals $1$ to within
$10^{-14}$ for $|S|=1$, $|S|=5$ and $|S|=23$, with the common nearest
neighbour distance taking the values $0.1336$, $0.5860$ and $1.4295$
respectively on the normalised embedding.

That the hypotheses of Proposition~\ref{prop:orbit} hold for the map
\eqref{eq:phiS} was established in Sec.~\ref{sec:orbit-setup}.

\subsection{The simplex at full frequency content}

The inner product structure follows from character orthogonality, and it
separates the complete frequency set from every proper subset.

\begin{lemma}[Inner products]
\label{lem:simplex}
Set $\hat\Phi_{S}=\Phi_{S}/\sqrt{|S|}$. For every nonempty $S$,
\begin{equation}
\frac{1}{p-1}\sum_{n'\ne n}
\big\langle \hat\Phi_{S}(n),\hat\Phi_{S}(n')\big\rangle
\;=\; -\frac{1}{p-1},
\label{eq:meanip}
\end{equation}
independently of $n$ and of $S$. If $S=\{1,\dots,\Kmax\}$ the individual inner
products all equal $-1/(p-1)$, so the image is a regular simplex in
$\mathbb{R}^{p-1}$. For proper subsets they do not.
\end{lemma}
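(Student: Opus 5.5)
The plan is to compute every inner product in closed form first and then read off the three claims. For $n,n'\in\Zp$ the angle subtraction formula gives
\begin{equation}
\big\langle \Phi_{S}(n),\Phi_{S}(n')\big\rangle
\;=\;\sum_{k\in S}\big(\cos\omega_{k}n\cos\omega_{k}n'+\sin\omega_{k}n\sin\omega_{k}n'\big)
\;=\;\sum_{k\in S}\cos\omega_{k}(n-n').
\end{equation}
After normalisation the inner product is therefore $|S|^{-1}\sum_{k\in S}\cos\omega_{k}t$ with $t=n-n'$. It depends only on the offset $t$. This is the same translation invariance used in the proof of Lemma~\ref{lem:coincide}, and each normalised vector has unit norm.

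For the mean, as $n'$ ranges over $n'\ne n$ the offset $t$ ranges over $\Zp\setminus\{0\}$. Exchanging the two sums gives
\begin{equation}
\sum_{t\ne0}\frac{1}{|S|}\sum_{k\in S}\cos\omega_{k}t
\;=\;\frac{1}{|S|}\sum_{k\in S}\Big(\sum_{t=0}^{p-1}\cos\omega_{k}t-1\Big).
\end{equation}
For $1\le k\le\Kmax$ we have $k\not\equiv0\bmod p$, so the full sum is the real part of a complete sum of nontrivial $p$th roots of unity and vanishes. This is the character orthogonality already used in the proof of Proposition~\ref{prop:uniform}. Each $k$ therefore contributes $-1$, the inner sum equals $-|S|/|S|=-1$, and dividing by $p-1$ gives Eq.~\eqref{eq:meanip}. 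The result is independent of $n$ and of $S$.

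For $S=\{1,\dots,\Kmax\}$ the individual inner product at offset $t\ne0$ is $D_{\Kmax}(t)/\Kmax$. By Corollary~\ref{cor:delta} this equals $-\tfrac12/\Kmax=-1/(p-1)$. So $p$ unit vectors have all pairwise inner products equal to $-1/(p-1)$. Their Gram matrix is $\tfrac{p}{p-1}\big(I-\tfrac1p\mathbf{1}\mathbf{1}^{\top}\big)$, which has rank $p-1$. The vectors sum to zero, so they are the vertices of a regular simplex spanning a $(p-1)$ dimensional subspace of $\mathbb{R}^{2\Kmax}=\mathbb{R}^{p-1}$.

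The main obstacle is the converse, that no proper subset gives constant inner products. I would argue by contradiction. Suppose $\sum_{k\in S}\cos\omega_{k}t=C$ for all $t\ne0$. Summing over $t\ne0$ forces $C=-|S|/(p-1)$. Define $g(t)=\sum_{k\in S}\cos\omega_{k}t$ on all of $\Zp$. Then $g-C$ vanishes off $t=0$, so $g=C+(|S|-C)\delta_{0}$. The discrete Fourier transform of $\delta_{0}$ is flat, so every frequency $k\in\{1,\dots,\Kmax\}$ would carry the nonzero coefficient $(|S|-C)/p$. But $g$ has nonzero Fourier coefficients only at $\pm k$ for $k\in S$. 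The representations $\pm k$ with $1\le k\le\Kmax$ are distinct, so this contradiction forces $S$ to be the full set. The care needed is in this final step, where the Fourier coefficients must be compared on the distinct pairs $\pm k$ and not simply counted.
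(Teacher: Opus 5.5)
Your proof is correct and follows the paper's route for the first two claims: the normalised inner product is $|S|^{-1}\sum_{k\in S}\cos\omega_k t$, character orthogonality gives the mean, and Corollary~\ref{cor:delta} gives the common value $-1/(p-1)$ at full frequency content. The paper's proof never argues the final clause about proper subsets and only reports nonzero spreads numerically at $|S|=1$ and $|S|=3$, so your Fourier-support argument is a genuine addition that closes that gap. It rests on the fact that the pairs $\{k,p-k\}$ for $1\le k\le\Kmax$ are disjoint, which shows that constant off-diagonal inner products would force every nonzero frequency to appear.
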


\begin{proof}
Every point of the image has the same norm, since
$\|\Phi_{S}(n)\|^{2}=|S|$ for all $n$. The normalised inner product is
therefore $|S|^{-1}\sum_{k\in S}\cos\omega_{k}(n-n')$. Character orthogonality
gives
\begin{equation}
\sum_{t=0}^{p-1}\cos\omega_{k}t \;=\; 0
\quad\text{for }k\not\equiv0,
\end{equation}
and hence $\sum_{t\ne0}\cos\omega_{k}t=-1$.
Summing over $n'\ne n$ therefore yields $-|S|^{-1}\sum_{k\in S}1=-1$, and
Eq.~\eqref{eq:meanip} follows on dividing by $p-1$.

For the complete set the sum $\sum_{k=1}^{\Kmax}\cos\omega_{k}t$ is
$D_{\Kmax}(t)$, which Corollary~\ref{cor:delta} evaluates as $-\tfrac12$ for
every $t\ne0$. Dividing by $\Kmax=(p-1)/2$ gives $-1/(p-1)$ for every pair.
A set of $p$ unit vectors with all pairwise inner products equal to
$-1/(p-1)$ is a regular simplex.
\end{proof}

At $p=47$ the common value is $-0.021739$ and the sample standard deviation
across all $1081$ pairs is $5\times10^{-16}$. For $|S|=1$ and $|S|=3$ the mean
is the same to six decimals, as Eq.~\eqref{eq:meanip} requires, while the
standard deviations are $0.699$ and $0.385$. The mean is thus uninformative
about the geometry and only the vanishing of the spread identifies the
simplex.

The distinction matters for the argument of Sec.~\ref{sec:manifold}. A regular
simplex spans $p-1$ dimensions with all points mutually equidistant, so any
notion of local dimension either returns the ambient value or fails to be
defined. Proper subsets are not simplices, but Lemma~\ref{lem:coincide} shows
they are still orbits with $\mu_{0}=1$, so the estimator fails on them for the
same reason.

The configuration identified by Lemma~\ref{lem:simplex} is the simplex
equiangular tight frame that appears as the terminal geometry in neural
collapse \citep{papyan2020}, where the class means of a trained classifier
acquire pairwise cosine $-1/(p-1)$ across $p$ classes. The coincidence is
worth flagging because it is not an instance of that phenomenon. Neural
collapse describes what optimisation drives the last layer towards, whereas
Lemma~\ref{lem:simplex} is a statement about the exact algebraic solution and
holds whether or not any network finds it. The two come apart on this task in
a way recently made precise, since \citet{beyondnc2026} argue that modular
addition does not reach the simplex but settles into a low rank cyclic
geometry, the simplex gaining only an $O(1)$ advantage in cross entropy
against a $\Theta(p)$ advantage for the cyclic solution under a weight decay
surrogate. Our own measurements sit between these descriptions, since
$\Keff$ approaches $\Kmax$ at large width, which is nearer the simplex than
the rank two picture, plausibly because the quadratic activation used here
makes many frequencies cheap to carry. What matters for the present argument
is that both configurations are orbits, so Proposition~\ref{prop:orbit}
applies to either and the intrinsic dimension is undefined in both cases.

\subsection{Behaviour under a symmetry breaking perturbation}

An orbit returns no number, so the natural next question is what happens when
the symmetry is broken slightly. Adding independent Gaussian noise of scale
$\epsilon$ to each coordinate produces a finite estimate. It does not produce
a stable one.

\begin{lemma}[No scale free plateau]
\label{lem:scale}
Let $X=\{x_{1},\dots,x_{N}\}\subset\mathbb{R}^{m}$ be a group orbit in the
sense of Proposition~\ref{prop:orbit}, with common nearest neighbour distance
$d_{1}$ and first shell multiplicity
$n_{1}=\left|\{k:\|x_{i}-x_{k}\|=d_{1}\}\right|\ge2$, the same for every $i$.
Let $X_{\epsilon}=\{x_{i}+\epsilon\xi_{i}\}$ with $\xi_{i}$ independent
standard Gaussian vectors in $\mathbb{R}^{m}$. Then for
$\epsilon\ll d_{1}/\sqrt{m}$ the two nearest neighbour estimate satisfies
\begin{equation}
\hat{d}(\epsilon) \;=\; \frac{C}{\epsilon}\,\big(1+O(\epsilon)\big),
\qquad
C \;=\; \frac{d_{1}}{\sqrt{2}}\,
\frac{\sum_{i}G_{i}\,y_{i}}{\sum_{i}G_{i}^{2}},
\label{eq:Cconst}
\end{equation}
where $y_{i}=-\log(1-F_{i})$ are the plotting positions used by the
regression and $G_{i}$ are the first shell gap variables defined in the proof.
Neither $C$ nor $G_{i}$ depends on $\epsilon$.
\end{lemma}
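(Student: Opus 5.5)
The plan is a first-order perturbation expansion of every pairwise distance, followed by tracking how that expansion feeds through the ratio $\mu_i$ and into the slope of Eq.~\eqref{eq:regress}.

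\textbf{Expanding the distances.} For $j\ne i$ write $u_{ij}=(x_i-x_j)/\|x_i-x_j\|$. Then
\begin{equation*}
\|x_i+\epsilon\xi_i-x_j-\epsilon\xi_j\| \;=\; \|x_i-x_j\| + \epsilon\,\langle u_{ij},\xi_i-\xi_j\rangle + O(\epsilon^2/d_1).
\end{equation*}
\begin{itemize}
\item The projection $\langle u_{ij},\xi_i-\xi_j\rangle$ is Gaussian with variance $2$. Writing it as $\sqrt{2}\,Z_{ij}$ with $Z_{ij}$ standard normal accounts for the $1/\sqrt{2}$ in $C$.
\item The hypothesis $\epsilon\ll d_1/\sqrt m$ makes the quadratic remainder, which is of order $\epsilon^2 m/d_1$, small. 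It also makes the perturbation, whose norm is typically $\epsilon\sqrt{m}$, too small to reorder the first shell against the second shell. The second-shell distance exceeds $d_1$ by a fixed amount, so for small $\epsilon$ the two nearest neighbours of $x_i$ in $X_\epsilon$ both lie in the first shell. That shell has $n_1\ge2$ points, which is the content of Lemma~\ref{lem:coincide} in the orbits of interest.
\item Let $Z_{(1)}\le Z_{(2)}$ be the two smallest of the $n_1$ values $Z_{ij}$ over $j$ in the first shell. Then
\begin{equation*}
r_1 = d_1+\sqrt2\,\epsilon Z_{(1)}+O(\epsilon^2), \qquad r_2=d_1+\sqrt2\,\epsilon Z_{(2)}+O(\epsilon^2).
\end{equation*}
\end{itemize}

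\textbf{The ratio and the slope.} Dividing and taking logarithms gives
\begin{equation*}
\log\mu_i=\frac{\sqrt2\,\epsilon}{d_1}\bigl(Z_{(2)}-Z_{(1)}\bigr)+O(\epsilon^2).
\end{equation*}
I define the first-shell gap variable as $G_i=2\bigl(Z_{(2)}-Z_{(1)}\bigr)$, normalised so that $\log\mu_i=\epsilon G_i/(\sqrt2\,d_1)\,(1+O(\epsilon))$. The constant is then fixed to match $C$; only the $\epsilon$-independence of $G_i$ matters.

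Substitute into the slope formula $\hat d=\sum x_iy_i/\sum x_i^2$ from the proof of Proposition~\ref{prop:orbit}, with $x_i=\log\mu_i$. The numerator is linear in $\epsilon$ and the denominator quadratic, so $\hat d$ scales as $1/\epsilon$ times $\sum G_iy_i/\sum G_i^2$. The prefactor is a multiple of $d_1$ fixed by the normalisation of $G_i$.

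\textbf{Independence of $\epsilon$ and ordering.}
\begin{itemize}
\item The $G_i$ are functions of the fixed noise realisation $\xi$ and the fixed geometry only. Since $\mu_i$ is to first order monotone in $G_i$, the ranking that defines $F_i$, and hence the $y_i$, is also $\epsilon$-independent. This holds once $\epsilon$ is small enough that the $O(\epsilon^2)$ corrections cannot swap two distinct $G_i$.
\item Ties among the $G_i$ occur with probability zero.
\item The discard of the top ten percent is likewise rank-based, so the set of retained indices is $\epsilon$-independent.
\end{itemize}

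\textbf{Main obstacle.} The step I expect to be delicate is controlling the remainder uniformly, so that the relative error is genuinely $O(\epsilon)$. The second-order term in the distance expansion contributes $O(\epsilon^2)$ to $\log\mu_i$, which is $O(\epsilon)$ relative to the leading $\epsilon G_i$ term. But this fails when $G_i$ is atypically small: near-ties between $Z_{(1)}$ and $Z_{(2)}$ make the relative error blow up.

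I would handle this by stating the result for fixed $\xi$ with all $G_i>0$, which holds almost surely, and letting the implied constant in the $O(\epsilon)$ depend on $\min_i G_i$. Reordering between the shells would be excluded by requiring $\epsilon\max_i\|\xi_i\|$ to be below half the gap between the first and second shell distances.
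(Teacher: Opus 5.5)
Your route is the paper's: a first-order expansion of each perturbed distance with $Z_{ik}=\langle u_{ik},\xi_i-\xi_k\rangle/\sqrt{2}$ standard normal, the first shell supplying both $r_1(i)$ and $r_2(i)$, $\log\mu_i$ linear in $\epsilon$ times a gap variable, plotting positions and trimming that depend only on ranks, and hence a no-intercept slope scaling as $1/\epsilon$.

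The step that fails as written is the rescaling $G_i=2\bigl(Z_{(2)}-Z_{(1)}\bigr)$. If $\log\mu_i=\epsilon\kappa G_i+O(\epsilon^2)$, then $\hat d=(\epsilon\kappa)^{-1}\bigl(\sum_iG_iy_i/\sum_iG_i^2\bigr)\bigl(1+O(\epsilon)\bigr)$, so the prefactor of $C$ is $1/\kappa$, not $\kappa$. Your normalisation gives $\kappa=1/(\sqrt{2}\,d_1)$, and hence a prefactor $\sqrt{2}\,d_1$. Equivalently, $\sum_iG_iy_i/\sum_iG_i^2$ scales inversely with the normalisation of $G$, so inserting your $G_i$ into the lemma's formula returns exactly half the true $C$. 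No rescaling was needed. Your own line $\log\mu_i=(\sqrt{2}\,\epsilon/d_1)\bigl(Z_{(2)}-Z_{(1)}\bigr)+O(\epsilon^2)$ already has $\kappa=\sqrt{2}/d_1$. With the unscaled gap $G_i=Z_{i(2)}-Z_{i(1)}$, which is how the paper defines it, the prefactor is $d_1/\sqrt{2}$ exactly. The explicit form of $C$ is part of the statement, so leaving it as ``fixed to match'' leaves that part unproved, and here the attempted match is inverted.

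There are two smaller points.

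\emph{Shell separation.} Working pathwise at fixed $\xi$ is a legitimate alternative to the paper, which instead works on the event $E_\epsilon$ and bounds its probability by a Gaussian tail and a union bound. However, separating the shells is governed by $d_2-d_1$, not by the hypothesis $\epsilon\ll d_1/\sqrt{m}$. Each perturbed distance moves by at most $\epsilon\|\xi_i-\xi_k\|\le 2\epsilon\max_j\|\xi_j\|$, so you need $\epsilon\max_j\|\xi_j\|<(d_2-d_1)/4$, not half the gap. The condition is vacuous for the simplex, where $n_1=N-1$.

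\emph{Near ties.} The near-tie obstacle is not a real one. Order statistics are $1$-Lipschitz in the sup norm, so $|\log\mu_i-\epsilon\kappa G_i|=O(\epsilon^2)$ holds uniformly in $i$ even when $G_i$ is tiny. The slope's relative error is then $O(\epsilon)$ as long as the retained $G_i$ are not all zero. What must be controlled is the spacing $\min_{i\ne j}|G_i-G_j|$, which keeps the ranks, and hence the $y_i$ and the trimmed index set, fixed. Your ordering bullet already says this, so the implied constant should depend on that spacing rather than on $\min_iG_i$.
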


\begin{proof}
Write $u_{ik}=(x_{i}-x_{k})/\|x_{i}-x_{k}\|$ and
$\delta_{ik}=\xi_{i}-\xi_{k}$, so that $\delta_{ik}$ is Gaussian with
covariance $2I_{m}$. The perturbed squared distance is
\begin{equation}
\|x_{i}-x_{k}+\epsilon\delta_{ik}\|^{2}
= d^{2}+2\epsilon d\,\langle u_{ik},\delta_{ik}\rangle
+\epsilon^{2}\|\delta_{ik}\|^{2},
\end{equation}
with $d=\|x_{i}-x_{k}\|$. Taking the square root and expanding,
\begin{equation}
r_{ik}(\epsilon) \;=\; d+\epsilon\sqrt{2}\,Z_{ik}
+\frac{\epsilon^{2}}{2d}\Big(\|\delta_{ik}\|^{2}
-\langle u_{ik},\delta_{ik}\rangle^{2}\Big)+O(\epsilon^{3}),
\label{eq:rexp}
\end{equation}
where $Z_{ik}=\langle u_{ik},\delta_{ik}\rangle/\sqrt{2}$ is standard
Gaussian. The second order term has mean $\epsilon^{2}(m-1)/d$, so the first
order term dominates whenever $\epsilon\sqrt{m}\ll d$, which is the stated
hypothesis.

We now identify which pairs supply $r_{1}$ and $r_{2}$. If $n_{1}<N-1$ let
$d_{2}>d_{1}$ be the next distance in the common multiset, which exists and is
the same for every $i$ by Proposition~\ref{prop:orbit}. The fluctuations in
Eq.~\eqref{eq:rexp} are $O(\epsilon)$ while $d_{2}-d_{1}$ is fixed, so the
event $E_{\epsilon}$ that every perturbed first shell distance is smaller than
every perturbed second shell distance has probability at least
$1-N^{2}\exp\!\big(-(d_{2}-d_{1})^{2}/8\epsilon^{2}\big)$ by a Gaussian tail
bound and a union over pairs. Since the noise is unbounded no deterministic
threshold in $\epsilon$ can force $E_{\epsilon}$, but
$\mathbb{P}(E_{\epsilon})\to1$ as $\epsilon\to0$ at a rate fixed by $X$, and
the statement below is made on $E_{\epsilon}$.
If $n_{1}=N-1$, which is the case of a regular simplex, there is no second
shell and the statement is vacuous. In both cases $r_{1}(i)$ and $r_{2}(i)$
are the two smallest among the $n_{1}$ first shell distances.

Let $Z_{i(1)}\le Z_{i(2)}$ denote the two smallest of
$\{Z_{ik}:\|x_{i}-x_{k}\|=d_{1}\}$ and set
\begin{equation}
G_{i} \;=\; Z_{i(2)}-Z_{i(1)} \;\ge\; 0 .
\end{equation}
The law of $G_{i}$ is determined by $n_{1}$ and by the Gram matrix of the
first shell directions $u_{ik}$, which fixes the correlations
$\mathrm{Cov}(Z_{ik},Z_{il})=\langle u_{ik},u_{il}\rangle/2$. It does not
involve $\epsilon$. Substituting into Eq.~\eqref{eq:rexp},
\begin{equation}
\mu_{i} \;=\; \frac{r_{2}(i)}{r_{1}(i)}
\;=\; \frac{d_{1}+\epsilon\sqrt{2}\,Z_{i(2)}+O(\epsilon^{2})}
{d_{1}+\epsilon\sqrt{2}\,Z_{i(1)}+O(\epsilon^{2})}
\;=\; 1+\frac{\sqrt{2}}{d_{1}}\,\epsilon\,G_{i}+O(\epsilon^{2}),
\end{equation}
so with $\kappa=\sqrt{2}/d_{1}$ we have $\mu_{i}-1=\epsilon\kappa G_{i}$ to
leading order and
\begin{equation}
x_{i} \;\equiv\; \log\mu_{i}
\;=\; \epsilon\kappa G_{i}\,\big(1+O(\epsilon)\big).
\label{eq:logmu}
\end{equation}

The response variable does not move. The estimator sorts the $\mu_{i}$ and
assigns $F_{i}=i/(N+1)$ by rank, so $y_{i}=-\log(1-F_{i})$ depends on the
data only through the ordering of the $\mu_{i}$. By Eq.~\eqref{eq:logmu} that
ordering coincides with the ordering of the $G_{i}$, which carries no
$\epsilon$. The same applies to the discarding of the largest ten percent of
the $\mu_{i}$, since that too is a rank operation.

The regression \eqref{eq:regress} has no intercept, so its slope is
\begin{equation}
\hat{d}(\epsilon)
\;=\; \frac{\sum_{i}x_{i}y_{i}}{\sum_{i}x_{i}^{2}}
\;=\; \frac{\epsilon\kappa\sum_{i}G_{i}y_{i}}
{\epsilon^{2}\kappa^{2}\sum_{i}G_{i}^{2}}\big(1+O(\epsilon)\big)
\;=\; \frac{1}{\epsilon\kappa}\,
\frac{\sum_{i}G_{i}y_{i}}{\sum_{i}G_{i}^{2}}\big(1+O(\epsilon)\big),
\end{equation}
and substituting $\kappa=\sqrt{2}/d_{1}$ gives Eq.~\eqref{eq:Cconst}. Both
factors on the right are independent of $\epsilon$, the first by construction
and the second because $G$ and $y$ are.
\end{proof}

Two consequences of Lemma~\ref{lem:scale} are testable separately. The
constant is proportional to $d_{1}$ at fixed first shell structure, so
rescaling the embedding by a factor $s$ should rescale $C$ by the same factor.
Multiplying the single frequency solution at $p=47$ by
$s\in\{0.5,1,2,4\}$ gives $C/s$ equal to $0.06494$ in all four cases, and
doing the same to the full frequency solution gives $2.9138$, $2.9176$,
$2.9195$ and $2.9205$ at fixed noise realisation, a drift of two parts in a
thousand across a factor of eight in scale. The remaining factor depends only on the first shell, whose
multiplicity is $n_{1}=2$ for a single frequency, where the neighbours sit at
offsets $\pm1$, and $n_{1}=p-1=46$ for the simplex, where every other point is
a nearest neighbour. The measured ratios $C/d_{1}$ are $0.486$ and $2.041$
accordingly.

The $\epsilon$ dependence of Eq.~\eqref{eq:Cconst} is exact to the precision
of the measurement. Table~\ref{tab:eps} lists $\hat{d}\,\epsilon$ across a
factor of thirty in $\epsilon$ and it is constant to about one percent, at
$0.065$ for a single frequency and $3.02$ for the complete set.

\begin{table}[h]
\caption{Estimated dimension times probe scale, at $p=47$. Constancy of the
product is the absence of a plateau. Entries are means over six independent
noise draws, four for the final row, which is a genuine two torus sampled at
$1500$ points subjected to the same treatment.}
\label{tab:eps}
\centering
\begin{tabular}{lcccc}
\toprule
$\epsilon$ & $10^{-3}$ & $3\times10^{-3}$ & $10^{-2}$ & $3\times10^{-2}$ \\
\midrule
one frequency & $0.0649$ & $0.0648$ & $0.0647$ & $0.0654$ \\
all $23$ frequencies & $3.027$ & $3.021$ & $3.016$ & $3.024$ \\
\midrule
torus, $\hat{d}$ itself & $1.98$ & $2.00$ & $2.20$ & $2.87$ \\
\bottomrule
\end{tabular}
\end{table}

In raw terms the estimate on the single frequency solution falls from $65$ at
$\epsilon=10^{-3}$ to $6.5$ at $10^{-2}$ and $1.76$ at $10^{-1}$, and on the
full twenty three frequency solution it reaches about $3000$ at
$\epsilon=10^{-3}$, while a genuine continuous torus given the same treatment
returns $2.07$, $2.32$ and $3.75$ on a single draw at those three scales.

The contrast with the torus is the point of the table. There the estimate
itself is stable, moving from $1.98$ to $2.20$ across the same two decades
before the noise begins to fill the ambient space and inflate it. An object
with a dimension reports the same dimension at every scale below its own
curvature. An orbit reports whatever the probe scale dictates.

Trained networks reproduce the pathology rather than escaping it. At $p=47$
and width $64$, with held out accuracy exactly unity, the estimator applied to
the learned token embeddings returns $68$ against an ambient dimension of
$64$, and the estimate rises with width instead of converging.

\subsection{Consequence for the geometric derivation}

The relation $\alpha_{N}=4/d$ takes a manifold dimension as input and returns
a scaling exponent. On this task that input does not exist. The failure is not
that the manifold is of high dimension, nor that the estimator is imprecise at
the available sample size, both of which would leave the derivation
meaningful. It is that the object whose dimension is sought is a group orbit,
on which the statistic the estimator uses is constant by symmetry and the
regression that defines the estimate has no design variation.

This is a stronger statement than a failed prediction. A theory that predicts
the wrong exponent can be corrected. A theory whose input is undefined on a
class of problems has to be restricted to the complement of that class, and
the results of Sec.~\ref{sec:exp} indicate that algebraic tasks lie outside
it.

\section{Aggregate spectral occupancy as an occupancy process}
\label{app:coupon}

Section~\ref{sec:irreps} reports that the aggregate participation ratio
$\Keff$ correlates with the loss within a group order and fails to organise it
across group orders. This appendix explains why by deriving what $\Keff$ would
be if the network carried no information at all beyond the number of units and
the number of available frequencies. The prediction has no free parameters and
accounts for ninety percent of the variance in the measurement, which is the
sense in which $\Keff$ is a sampling statistic rather than a capacity
variable.

\subsection{Definition of the participation ratios}

Writing $E\in\mathbb{R}^{p\times h}$ for the block of $W_{1}$ embedding the
first argument and $P_{k}=\sum_{j}|\hat{E}_{kj}|^{2}$ for the power at
frequency $k$ summed over hidden units, with the constant mode removed and the
rest normalised, the aggregate participation ratio is
\begin{equation}
\Keff \;=\; \Big(\textstyle\sum_{k}P_{k}^{2}\Big)^{-1}.
\label{eq:keff}
\end{equation}
The same construction applied within a single hidden unit gives a per neuron
participation ratio, whose energy weighted mean over units is what is reported
alongside $\Keff$ throughout.

\subsection{The model}

Take as given that each hidden unit carries a single irreducible
representation. This is proved for gradient flow on this architecture by
\citet{he2026spectral}, who also establish uniform diversification across the
nontrivial representations in the abelian case, and it is confirmed here by
the per neuron participation ratios reported in Sec.~\ref{sec:irreps} for
configurations that solve the task. Model the assignment as $h$ units drawing
independently and uniformly from the $\Kmax$ available frequencies, and write
$m_{k}$ for the number of units landing on frequency $k$, so that
$\sum_{k}m_{k}=h$ and the vector $(m_{k})$ is multinomial with equal cell
probabilities.

Two quantities follow. The number of frequencies represented at all is the
number of occupied cells, and the participation ratio is a softer count that
weights each frequency by the power it carries.

\subsection{Distinct frequencies}

\begin{lemma}[Occupied cells]
\label{lem:coupon}
Under the model,
\begin{equation}
\mathbb{E}\big[K_{\mathrm{distinct}}\big]
\;=\; \Kmax\left[1-\left(1-\frac{1}{\Kmax}\right)^{h}\right]
\;\simeq\; \Kmax\left(1-e^{-h/\Kmax}\right),
\label{eq:coupon}
\end{equation}
the approximation holding for $\Kmax\gg1$.
\end{lemma}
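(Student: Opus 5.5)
The plan is the standard coupon-collector computation by linearity of expectation. Write $K_{\mathrm{distinct}}=\sum_{k=1}^{\Kmax}\mathbf{1}[m_{k}\ge1]$. This turns the expected number of occupied cells into a sum of single-cell occupancy probabilities. No joint information about the multinomial vector $(m_{k})$ is needed, because expectation is linear even though the indicators are dependent through the constraint $\sum_{k}m_{k}=h$.

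Next, compute $\Pr(m_{k}=0)$ for a fixed $k$. Under the model each of the $h$ units independently avoids frequency $k$ with probability $1-1/\Kmax$, so $\Pr(m_{k}=0)=(1-1/\Kmax)^{h}$. Equivalently, $m_{k}$ is marginally $\mathrm{Binomial}(h,1/\Kmax)$. By the equal cell probabilities this is the same for every $k$. Summing $1-(1-1/\Kmax)^{h}$ over the $\Kmax$ cells gives the exact expression in Eq.~\eqref{eq:coupon}.

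For the approximation, write $(1-1/\Kmax)^{h}=\exp\big(h\log(1-1/\Kmax)\big)$ and expand the logarithm:
\begin{equation}
\log\!\left(1-\frac{1}{\Kmax}\right)=-\frac{1}{\Kmax}-\frac{1}{2\Kmax^{2}}+O\!\left(\Kmax^{-3}\right).
\end{equation}
The exponent is therefore $-h/\Kmax$ up to a correction of order $h/\Kmax^{2}$.

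This step is the only one needing care, and the main obstacle is stating the regime honestly. The correction $h/\Kmax^{2}$ is small when $h\ll\Kmax^{2}$. That holds throughout the paper's grid: $h\le128$, and $\Kmax\ge11$ gives $\Kmax^{2}\ge121$, so the grid sits only at the margin at the smallest $p$. The bound can also be made uniform in $h$. Using $e^{-x/(1-x)}\le1-x\le e^{-x}$ with $x=1/\Kmax$, the difference $e^{-h/\Kmax}-(1-1/\Kmax)^{h}$ is maximised over $h$ at an $O(1/\Kmax)$ value. After multiplying by $\Kmax$, the absolute error in $\mathbb{E}[K_{\mathrm{distinct}}]$ is therefore $O(1)$ uniformly in $h$, which is negligible relative to $\Kmax$ when $\Kmax\gg1$. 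That is the sense in which the approximation holds.
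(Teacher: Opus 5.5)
Your proof is correct and follows the paper's argument. You decompose $K_{\mathrm{distinct}}$ into occupancy indicators, use $\Pr(m_{k}=0)=(1-1/\Kmax)^{h}$ with linearity of expectation, and pass to $e^{-h/\Kmax}$ through $h\log(1-1/\Kmax)$; your uniform estimate, which bounds the absolute error in $\mathbb{E}[K_{\mathrm{distinct}}]$ by $\Kmax/\big(e(\Kmax-1)\big)$ for every $h$, is sharper than the paper's bare limit and is the right way to state the regime (it also covers $p=23$, $h=128$, where $h/\Kmax^{2}\approx1.06$, so your aside that $h\ll\Kmax^{2}$ holds throughout the grid overstates matters, though nothing in the proof depends on it).
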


\begin{proof}
Frequency $k$ is unoccupied exactly when all $h$ draws avoid it, which happens
with probability $(1-1/\Kmax)^{h}$. Linearity of expectation over the $\Kmax$
indicator variables gives the first equality, and
$(1-1/\Kmax)^{h}=\exp\!\big[h\log(1-1/\Kmax)\big]\to e^{-h/\Kmax}$ gives the
second.
\end{proof}

Equation~\eqref{eq:coupon} of Lemma~\ref{lem:coupon} is the classical
occupancy count, and it saturates at $\Kmax$ once
$h$ exceeds a few multiples of $\Kmax$. It is not, however, the quantity we
measure.

\subsection{The participation ratio}

If every unit carries comparable spectral energy then the aggregate power at
frequency $k$ is proportional to $m_{k}$, and the participation ratio
\eqref{eq:keff} becomes
\begin{equation}
\Keff \;=\; \frac{\big(\sum_{k}m_{k}\big)^{2}}{\sum_{k}m_{k}^{2}}
\;=\; \frac{h^{2}}{\sum_{k}m_{k}^{2}} .
\label{eq:keffm}
\end{equation}
The denominator is the only random quantity, and its expectation is available
in closed form.

\begin{proposition}[Expected participation ratio]
\label{prop:keff}
Under the model,
\begin{equation}
\mathbb{E}\Big[\sum_{k}m_{k}^{2}\Big]
\;=\; h\left(1-\frac{1}{\Kmax}\right)+\frac{h^{2}}{\Kmax},
\end{equation}
so that to first order
\begin{equation}
\Keff \;\simeq\; \frac{h\,\Kmax}{h+\Kmax-1} .
\label{eq:keffpred}
\end{equation}
The estimate is a lower bound on $\mathbb{E}[\Keff]$.
\end{proposition}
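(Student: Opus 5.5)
The plan is to compute the second moment of each cell count under the multinomial model and then pass from $\mathbb{E}[\sum_k m_k^2]$ to $\Keff$ by Jensen's inequality. Since $(m_k)$ is multinomial with $h$ trials and equal cell probabilities $q=1/\Kmax$, each marginal $m_k$ is $\mathrm{Binomial}(h,q)$. This gives
\[
\mathbb{E}[m_k^2]=\mathrm{Var}(m_k)+(\mathbb{E}m_k)^2=hq(1-q)+h^2q^2 .
\]
Summing over the $\Kmax$ cells multiplies this by $\Kmax=1/q$ and yields $h(1-1/\Kmax)+h^2/\Kmax$, which is the stated expectation. Alternatively, one can write $m_k=\sum_j \mathbf{1}[\text{unit } j\to k]$ and count the diagonal pairs ($h$ of them) and the off-diagonal pairs ($h(h-1)q$ per cell), which gives the same result without appealing to binomial moments.

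For the first-order formula, substitute the expectation into the denominator of Eq.~\eqref{eq:keffm}:
\[
\frac{h^2}{h(1-1/\Kmax)+h^2/\Kmax}=\frac{h\Kmax}{\Kmax-1+h},
\]
which is Eq.~\eqref{eq:keffpred}. This step should be stated explicitly as the plug-in approximation $\mathbb{E}[h^2/Y]\approx h^2/\mathbb{E}[Y]$, with $Y=\sum_k m_k^2$.

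The lower-bound claim follows from convexity. The map $y\mapsto h^2/y$ is convex on $y>0$, and $Y\ge h>0$ always, since the $m_k$ are nonnegative integers summing to $h$, so $\sum m_k^2\ge\sum m_k$. Jensen's inequality then gives $\mathbb{E}[h^2/Y]\ge h^2/\mathbb{E}[Y]$, so the plug-in value bounds $\mathbb{E}[\Keff]$ from below. Equality would require $Y$ to be almost surely constant, which fails whenever $h\ge2$ and $\Kmax\ge2$. A remark could note that the gap is second order, controlled by $\mathrm{Var}(Y)/\mathbb{E}[Y]^2$, which justifies the phrase ``to first order''. That variance is not needed for the stated claims, though.

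The main point requiring care is not computational but definitional. The identification of $\Keff$ with $h^2/\sum m_k^2$ in Eq.~\eqref{eq:keffm} rests on the equal-energy assumption. The proposition should therefore be read as a statement about that modelled quantity, and the proof should say so rather than claim anything about the measured ratio.
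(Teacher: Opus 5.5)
Your proof is correct and is the paper's argument: binomial marginals give $\mathbb{E}[m_k^2]=hq(1-q)+h^2q^2$, summing over the $\Kmax$ cells and substituting into Eq.~\eqref{eq:keffm} gives the plug-in formula, and Jensen for the convex map $y\mapsto 1/y$ gives the lower bound (your remark that $Y\ge h>0$ makes explicit a point the paper leaves implicit). There is one slip in your optional indicator-counting aside: the expected number of ordered off-diagonal coincident pairs is $h(h-1)q$ summed over all cells, i.e.\ $h(h-1)q^{2}$ per cell, not $h(h-1)q$ per cell.
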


\begin{proof}
Each $m_{k}$ is binomial with $h$ trials and success probability
$1/\Kmax$, so $\mathbb{E}[m_{k}]=h/\Kmax$ and
$\mathrm{Var}(m_{k})=h\Kmax^{-1}(1-\Kmax^{-1})$. Hence
$\mathbb{E}[m_{k}^{2}]=h\Kmax^{-1}(1-\Kmax^{-1})+h^{2}\Kmax^{-2}$, and summing
over the $\Kmax$ frequencies gives the stated expectation. Substituting into
Eq.~\eqref{eq:keffm} and replacing the denominator by its mean gives
Eq.~\eqref{eq:keffpred}. Since $x\mapsto1/x$ is convex, Jensen's inequality
gives $\mathbb{E}[h^{2}/\sum_{k}m_{k}^{2}]\ge
h^{2}/\mathbb{E}[\sum_{k}m_{k}^{2}]$, so the substitution understates the
expectation.
\end{proof}

Equation~\eqref{eq:keffpred} is a harmonic combination of the two counts that
bound the answer. It reduces to $h$ when $h\ll\Kmax$, which is the regime in
which every unit lands on its own frequency, and to $\Kmax$ when
$h\gg\Kmax$, which is the regime in which every frequency is occupied.
Nothing in it refers to the task.

\subsection{Comparison with measurement}

Table~\ref{tab:occ} sets Eq.~\eqref{eq:keffpred} against the measured
$\Keff$ at weight decay $1$. Across the full grid of eight group orders and
fourteen widths the model accounts for $R^{2}=0.907$ of the variance in
$\Keff$, with a mean ratio of measurement to prediction of $1.13$ and a
standard deviation of $0.14$.

\begin{table}[h]
\caption{Measured aggregate participation ratio against
Eq.~\eqref{eq:keffpred}, at weight decay $1$.}
\label{tab:occ}
\centering
\begin{tabular}{lcccccc}
\toprule
& \multicolumn{2}{c}{$p=23$} & \multicolumn{2}{c}{$p=47$}
& \multicolumn{2}{c}{$p=113$} \\
$h$ & meas. & pred. & meas. & pred. & meas. & pred. \\
\midrule
$8$   & $5.4$  & $4.9$  & $6.3$  & $6.1$  & $7.7$  & $7.1$ \\
$16$  & $8.7$  & $6.8$  & $9.8$  & $9.7$  & $14.5$ & $12.6$ \\
$32$  & $9.7$  & $8.4$  & $16.0$ & $13.6$ & $23.5$ & $20.6$ \\
$64$  & $10.7$ & $9.5$  & $19.4$ & $17.1$ & $28.9$ & $30.1$ \\
$128$ & $11.0$ & $10.2$ & $22.4$ & $19.6$ & $41.8$ & $39.2$ \\
\bottomrule
\end{tabular}
\end{table}

The residual is systematic, the prediction sitting about eleven percent below
the measurement, and its sign is predicted by Proposition~\ref{prop:keff}. Two further effects work in
the same direction. Units do not carry exactly equal spectral energy, which
spreads the power across occupied frequencies more evenly than the
multiplicities alone would, and weight decay removes power from frequencies
that contribute little, which trims the tail of the multiplicity distribution
during training \citep{he2026modular}. Neither is large enough to change the
character of the agreement.

The agreement is also not exact in the sense that would be required to call
the model correct. It is close enough to establish the negative claim, which
is that the growth of $\Keff$ with width needs no explanation in terms of
what the task demands.

\subsection{Why $\Keff$ cannot collapse the data}

The consequence for interpretation follows from Eq.~\eqref{eq:keffpred}
directly. Since $\Keff$ is a deterministic function of $h$ and $\Kmax$ up to
the residual above, it carries essentially the information already present in
those two variables and no more. Inverting the relation gives
\begin{equation}
h \;\simeq\; \frac{\Keff(\Kmax-1)}{\Kmax-\Keff},
\end{equation}
so two configurations at different group orders that share a value of $\Keff$
have different widths, by a factor that grows as the group orders diverge. The
loss depends on width, so those two configurations have different losses. This
is what produces the spread of a factor of $2500$ at $\Keff\approx20$ reported
in Sec.~\ref{sec:irreps}, and it would occur under the model even if the
network's mode content were irrelevant to its performance.

A regression of the loss on $\Keff$ within a single group order therefore
cannot distinguish a capacity mechanism from a sampling one, since the two
predict the same monotone relation. Only comparison across group orders
separates them, and there the sampling account is what survives.

\section{Measurements behind the mode count}
\label{app:irreps}

This appendix gives the measurements summarised in Sec.~\ref{sec:irreps}. The
network solves the task by carrying Fourier modes, so the number of modes it
carries should play the role that manifold dimension plays elsewhere. Recent
work proves that each neuron converges to a single irreducible representation
under gradient flow \citep{he2026spectral,he2026modular}, which makes the count
of distinct represented modes a well defined quantity. It is also the quantity
we expected to organise the data, and reporting that it does not is the
purpose of this appendix.

Within a single group order the correlation is convincing. At $p=47$ the loss
falls monotonically as $\Keff$ rises from $6.3$ to $22.4$, and a fit of
$\log L$ against $\log(\Kmax-\Keff)$ returns $R^{2}=0.90$. Across group orders
it collapses. At $\Keff\approx20$ the loss ranges from $1.9\times10^{-4}$ at
$p=41$ to $0.488$ at $p=113$, a spread of a factor of $2500$ at fixed value of
the supposed controlling variable. Pooled over all configurations, $\log L$
regressed on $\Keff$ returns $R^{2}=0.590$ while $\log L$ regressed on width
alone returns $R^{2}=0.917$. The normalised variable $\Keff/\Kmax$ is worse
still at $R^{2}=0.363$.

The within group correlation was therefore an artefact of the fact that
$\Keff$ and $h$ increase together at fixed $p$, for a reason that has nothing
to do with capacity. Among the $251$ configurations that reach test accuracy
above $0.99$ the energy weighted per neuron participation ratio lies between
$1.04$ and $1.76$ with median $1.139$, confirming that neurons are rank one in
frequency. An occupancy model in which each neuron selects one of $\Kmax$
frequencies uniformly then predicts $\Keff=h\Kmax/(h+\Kmax-1)$ with no free
parameters, accounting for $R^{2}=0.907$ of the measured variance;
Appendix~\ref{app:coupon} derives the prediction and gives the comparison in
full. Aggregate spectral occupancy grows with width because more draws cover
more bins, and a quantity that grows for that reason cannot be read as a
measure of what the network needs.

The rank one structure itself turns out to be conditional, which is worth
recording because it is usually stated as a property of the architecture.
Across the full grid the per neuron participation ratio reaches $6.61$, and
every one of the $130$ configurations above $3$ occurs at weight decay $0.25$
or below and at test accuracy at most $0.903$. Within a fixed weight decay the
rank correlation between accuracy and participation ratio is between $-0.40$
and $-0.64$ for weight decay at most $0.5$, and is indistinguishable from zero
at weight decay $2$ and above. Figure~\ref{fig:fact}(b) shows the scatter.
Convergence of each neuron to a single irreducible representation is proved
for gradient flow in Ref.~\citep{he2026spectral}, and what these measurements
add is that the property is something regularisation imposes rather than
something the architecture supplies. A memorising network carries several
frequencies per unit.

The result is a delimitation rather than a disagreement with the mechanistic
literature. \citet{he2026modular} characterise grokking on this task as a
three stage process driven by the competition between loss minimisation and
weight decay, with a diversification condition on the frequencies a solution
carries. What is measured here is not whether such conditions hold but whether
the number of frequencies, once diversified, predicts the loss across problem
sizes. It does not, and Appendix~\ref{app:coupon} identifies the occupancy
process that produces the appearance that it does.

\section{Fitting protocol and identifiability}
\label{app:fit}

This appendix records the rules used to fit Eq.~\eqref{eq:law}, the triage
that removes degenerate fits, and the checks that establish which conclusions
survive the choices involved. Three of those choices could plausibly have been
made differently, namely the space in which the fit is performed, the
threshold at which a fit is rejected, and the value at which the exponent is
held. Each is examined below, and the exponent turns out to be the only one
that matters.

\subsection{Protocol}

Full batch AdamW \citep{loshchilov2019} is used throughout, with learning rate
$3\times10^{-3}$, moments $(0.9,0.98)$, numerical stabiliser $10^{-8}$ and
$8000$ steps.
Weights are initialised from a centred normal with variance set by fan in. The
three seeds are trained simultaneously by carrying the seed index as a leading
tensor dimension, which leaves the models mathematically independent because
AdamW acts elementwise.

For each pair $(p,\lambda)$ the mean test loss over three seeds is computed at
each of the fourteen widths, and Eq.~\eqref{eq:law} is fitted to $\log L$ by
nonlinear least squares with $\Linf$, $\log A$ and $c$ free and $\alpha$ fixed
unless stated otherwise.

Fitting in the logarithm rather than in the loss is not cosmetic. The measured
range spans four and a half decades, so least squares on $L$ itself would be
determined almost entirely by the two or three smallest widths and would carry
no information about the floor, which is precisely the parameter that
stabilises the rate. Working in $\log L$ weights each decade equally, which is
the appropriate choice when the quantity of interest is a rate of decay rather
than an absolute level.

Two independent criteria then decide whether a fit enters the analysis.

The first concerns the floor. A fit is rejected when the recovered $\Linf$
exceeds three times the smallest observed loss. The rationale is that a
resolved floor places the largest widths on the asymptote, so
$\Linf\approx\min_{h}L(h)$ is the signature of success rather than of failure.
A criterion demanding $\Linf<\min_{h}L(h)$ strictly rejects ten of the twenty
cells that solve the task, including several with $R^{2}$ above $0.98$, and is
therefore too aggressive. The failures the criterion is meant
to catch place the floor one to five orders of magnitude above the data, which
occurs when the run never solves the task and the fit interprets the
memorisation plateau as an upper asymptote. When the recovered floor instead
falls below $10^{-3}$ of the smallest observed loss it is not identified by
the data at all, and the two parameter fit is reported in its place.

The second concerns learning. A cell is excluded when held out accuracy at the
largest width is below $0.99$. This is not implied by the floor test and does
not imply it. A run at weight decay $4$ attains $R^{2}=0.996$ with a well
behaved two parameter fit while barely learning, since a curve that is flat
and straight is easy to fit and says nothing. Goodness of fit alone is
therefore not evidence that a rate is meaningful.

In practice the second criterion does almost all of the work. Of the twenty
eight cells in the grid, eight are excluded, and seven of those fail on
accuracy alone. Only one cell is caught by the floor test, and that cell fails the
accuracy test as well. The floor criterion is thus close to inert on this data
set, which is worth stating because it is the more arbitrary of the two. Of
the twenty cells that remain, Table~\ref{tab:fact} lists eighteen; weight decay
$0.1$ solves the task only at $p=71$ and $p=113$, and is omitted.

\begin{table}[h]
\caption{Fitted rate $c$ at $\alpha=1$ for four group orders and five weight
decay strengths, with the coefficient of variation of $c$ across group order
taken over $p\ge47$ alone. Entries are omitted where the
run failed to reach test accuracy $0.99$ at the largest width.}
\label{tab:fact}
\centering
\begin{tabular}{lccccc}
\toprule
$\lambda$ & $p=23$ & $p=47$ & $p=71$ & $p=113$ & CV$_{p\ge47}$ \\
\midrule
$0.25$ & --- & $0.1554$ & $0.1474$ & $0.1669$ & $5.1\%$ \\
$0.5$ & $0.1220$ & $0.1547$ & $0.1600$ & $0.1486$ & $3.0\%$ \\
$1$ & $0.1067$ & $0.1220$ & $0.1117$ & $0.1259$ & $5.0\%$ \\
$2$ & $0.0423$ & $0.0585$ & $0.0603$ & $0.0630$ & $3.0\%$ \\
$4$ & --- & $0.0038$ & $0.0038$ & $0.0036$ & $2.3\%$ \\
\bottomrule
\end{tabular}

\end{table}

\subsection{Runs excluded by the validity criterion}

The spurious maximum reported in Sec.~\ref{sec:fact} came from two runs whose
final accuracies were $0.18$ and $0.85$. For these the three parameter fit
places the recovered floor above the bulk of the data rather than below it.
The same criterion excludes $p=23$ at weight decay $0.1$, which reaches an
accuracy of $0.089$ where $p=113$ already reaches unity, and it is the
smallest training set in the study, $264$ pairs against $1104$ at $p=47$ and
$6384$ at $p=113$, that places it outside the regime in which the
factorisation is claimed.

\subsection{Sensitivity to the rejection threshold}

Because the threshold of three is a judgement, we vary it.
Table~\ref{tab:tol} reports the number of cells retained and the resulting
factorisation statistics. Nothing changes above three, since no cell has a
recovered floor between three and ten times the smallest observed loss. At two
the threshold removes three further cells and moves the group order spread by
half a percent.

\begin{table}[h]
\caption{Effect of the floor rejection threshold on the factorisation of
Sec.~\ref{sec:fact}. The spread in $g$ is taken over $p\ge47$.}
\label{tab:tol}
\centering
\begin{tabular}{lccc}
\toprule
threshold & cells kept & $g$ spread & $\lambda$ span \\
\midrule
$2$ & $17$ & $1.091\times$ & $46\times$ \\
$3$ & $20$ & $1.097\times$ & $47\times$ \\
$5$ & $20$ & $1.097\times$ & $47\times$ \\
$10$ & $20$ & $1.097\times$ & $47\times$ \\
\bottomrule
\end{tabular}
\end{table}

\subsection{The matched three parameter comparison}
\label{app:matched}

Equation~\eqref{eq:law} carries three parameters and the power law of
Sec.~\ref{sec:exp} carries two, so the two are compared here on equal terms.
The alternative is Eq.~\eqref{eq:powfloor}, fitted to $\log L$ by nonlinear
least squares with $\Linf$, $\log A$ and $\gamma$ free, over the same width
range, with the same rejection rules and the same three seed aggregation. Since
$N=3ph$, at fixed group order a power law in $N$ and a power law in $h$ differ
only by a multiplicative constant absorbed into $A$; the fitted $\gamma$ is the
same either way, so nothing here depends on which of the two is taken as the
abscissa. Both families have three parameters, so the penalty term cancels and
$\Delta\mathrm{AIC}$ reduces to $n\log(\mathrm{RSS}_{\mathrm{pow}}/
\mathrm{RSS}_{\mathrm{exp}})$.

Table~\ref{tab:matched} gives the outcome at $\alpha=1$. The exponential is
preferred at every group order, by between $25.5$ and $40.9$ units of AIC, and
the same holds at every fixed $\alpha$ in $\{0.75,1,1.25,1.5,1.75,2\}$, where
the smallest margin over the whole set is $18.8$ at $\alpha=0.75$.

The recovered floor of Eq.~\eqref{eq:powfloor} lies between $10^{-23}$ and
$10^{-19}$, against smallest observed losses of order $10^{-5}$ to $10^{-3}$,
so it is not identified by the data and the fit coincides with the two
parameter form to the precision reported here. The power law therefore does not
use its third parameter, and scoring it with two rather than three would move
$\Delta\mathrm{AIC}$ by $2$, which changes no comparison in the table. A power
law has nothing for a floor to absorb, because it already approaches zero at a
polynomial rate. The asymmetry that motivated this appendix does not, in the
event, favour the exponential: the floor is a feature the exponential needs and
the power law cannot use.

\begin{table}[h]
\caption{Matched three parameter comparison at $\alpha=1$. Both families are
fitted to $\log L$ over the same width range, with the same rejection rules and
the same seed aggregation. $z$ is the Wald--Wolfowitz runs statistic on
residuals ordered by width; drift is the largest change in the rate parameter
on dropping the one and two largest widths. $\Delta$AIC is positive where the
exponential is preferred.}
\label{tab:matched}
\centering
\begin{tabular}{lccccccc}
\toprule
& \multicolumn{3}{c}{$\Linf+Ae^{-ch}$}
& \multicolumn{3}{c}{$\Linf+Ah^{-\gamma}$} & \\
\cmidrule(lr){2-4}\cmidrule(lr){5-7}
$p$ & $R^{2}$ & $z$ & drift & $R^{2}$ & $z$ & drift & $\Delta$AIC \\
\midrule
$23$  & $0.9848$ & $-2.23$ & $2.7\%$ & $0.9060$ & $-2.78$ & $12.0\%$ & $25.5$ \\
$31$  & $0.9856$ & $-1.63$ & $0.4\%$ & $0.8897$ & $-2.76$ & $18.0\%$ & $28.5$ \\
$41$  & $0.9825$ & $-1.63$ & $0.9\%$ & $0.8678$ & $-2.78$ & $24.6\%$ & $28.3$ \\
$47$  & $0.9822$ & $-1.67$ & $1.3\%$ & $0.8674$ & $-2.78$ & $24.6\%$ & $28.1$ \\
$53$  & $0.9867$ & $-2.19$ & $3.7\%$ & $0.8571$ & $-2.78$ & $30.5\%$ & $33.2$ \\
$71$  & $0.9921$ & $-1.48$ & $3.4\%$ & $0.8874$ & $-2.78$ & $26.9\%$ & $37.2$ \\
$97$  & $0.9945$ & $-1.63$ & $1.3\%$ & $0.8981$ & $-2.78$ & $24.5\%$ & $40.9$ \\
$113$ & $0.9934$ & $-2.19$ & $1.5\%$ & $0.8866$ & $-2.78$ & $27.6\%$ & $39.9$ \\
\bottomrule
\end{tabular}
\end{table}

The exponent returned by the matched fit is reported in
Sec.~\ref{sec:manifold} and is not the quantity the geometric derivation
expects. It rises from $2.87$ at $p=23$ to $4.20$ at $p=113$, implying
dimensions from $1.39$ down to $0.95$ where the naive answer is $2$.

\subsection{Residual structure of the power law}

Section~\ref{sec:exp} rejects the power law on the strength of its residuals
rather than its coefficient of determination, and the test deserves to be
stated. Fitting $\log L$ against $\log N$ at weight decay $1$ and recording the
signs of the residuals in order of increasing width, the Wald--Wolfowitz runs
statistic counts how many times the sign changes. Independent errors would
give a number of runs near $2n_{1}n_{2}/(n_{1}+n_{2})+1$, which is $8.0$ for
these samples with a standard deviation of $1.8$.

Every one of the eight group orders returns exactly three runs, so
$z=-2.78$ in each case. The local slope over the width range varies from
$0.04$ to $8.2$ and turns negative near $h=12$ to $16$, where training accuracy
has reached unity while test accuracy is stalled near $0.60$. The residuals form a single negative stretch, then a
positive one, then a negative one, which is the signature of fitting a curve
with a straight line. That the count is identical across group orders spanning
a factor of $4.9$ indicates a systematic feature of the functional form rather
than a coincidence of one data set.

The statistic must be computed against the family actually being rejected, and
it is. Adding the floor of Eq.~\eqref{eq:powfloor} does not disturb the
residual structure: the runs statistic is $-2.78$ at seven of the eight group
orders and $-2.76$ at the eighth, so the arc survives the matched fit of
Appendix~\ref{app:matched}. This is
the expected consequence of a floor that the data do not identify, and it is
what allows Sec.~\ref{sec:exp} to reject the power law on residual structure
rather than on $R^{2}$.

The exponential is not free of structure either, and the comparison is reported
in both directions. Its runs statistic ranges from $-1.48$ to $-2.23$ and
exceeds the conventional threshold of $1.96$ in absolute value at three of the
eight group orders, against eight of eight for the matched power law, with the
statistic more negative under the power law at every group order.

\subsection{Truncation}

The instability that motivates the floor is visible in the local slopes, which
flatten at the top of the width range: at $p=47$ the slope between $h=64$ and
$h=96$ is $0.061$ and between $h=96$ and $h=128$ it falls to $0.026$. A plain
exponential fit to $\log L$ against $h$ accordingly drifts as
Sec.~\ref{sec:exp} reports. Admitting the floor of Eq.~\eqref{eq:law} removes
this.

A rate extracted from a saturating curve can be an artefact of where the curve
is cut. Refitting after removing the largest widths gives
$c=0.1180$, $0.1183$ and $0.1179$ on dropping none, one and two, so the drift
is $0.3\%$. Dropping three moves it to $0.1045$, a fall of eleven percent,
because the fit then has no points on the asymptote and $\Linf$ ceases to be
identified. The reported values therefore rest on the presence of at least two
widths in the saturated region, and the protocol keeps the full range for that
reason.

\subsection{Critical width by group order}

The values of $\hc$ plotted in Fig.~\ref{fig:hc}(b), obtained by linear
interpolation of held out accuracy through $0.9$ at weight decay $1$, are
given in Table~\ref{tab:hc}.

\begin{table}[h]
\caption{Critical width against group order.}
\label{tab:hc}
\centering
\begin{tabular}{lcccccccc}
\toprule
$p$ & $23$ & $31$ & $41$ & $47$ & $53$ & $71$ & $97$ & $113$ \\
\midrule
$\hc$ & $34.1$ & $31.3$ & $31.4$ & $30.9$ & $30.6$ & $29.3$ & $27.2$ & $26.6$ \\
\bottomrule
\end{tabular}
\end{table}

\subsection{Seed error by regime}

The uncertainty on $c$ is not a single number, and pooling across regimes
would misstate it in both directions. Fitting each seed separately gives a
relative standard deviation of $7.5\%$ where the floor is identified, at
weight decay between $0.25$ and $1$, and $0.9\%$ where it is not, at weight
decay $2$ and above. The difference is a property of the estimator rather than
of the network, since a three parameter fit performed on a single noisy curve
is intrinsically less stable than a two parameter one.

The relevant comparison for Sec.~\ref{sec:fact} is therefore within the floor
resolved regime, where the standard error of a three seed mean is $4.3\%$. The
observed spread of $c$ across group orders in the same regime is also $4.3\%$.
Group order dependence, if it exists, is below the resolution of this
experiment.

\subsection{Identifiability of the rate and the exponent}

Fitting Eq.~\eqref{eq:law} with $\alpha$ free returns $1.583\pm0.254$ across
the eight group orders, with values from $1.09$ to $1.91$. The corresponding
rates span $0.0027$ to $0.0738$, a factor of $28$, so the pair is strongly
anticorrelated and neither is determined on its own. Fixing $\alpha$ and
refitting gives Table~\ref{tab:alpha}, where each entry is the mean over the
eight group orders.

\begin{table}[h]
\caption{Rate, its coefficient of variation across group order, and the mean
and minimum quality of fit, at fixed $\alpha$.}
\label{tab:alpha}
\centering
\begin{tabular}{lcccc}
\toprule
$\alpha$ & $\bar{c}$ & CV$(c)$ & $\overline{R^{2}}$ & $\min R^{2}$ \\
\midrule
$0.75$ & $0.3761$ & $6.8\%$ & $0.9779$ & $0.9707$ \\
$1.00$ & $0.1180$ & $5.5\%$ & $0.9877$ & $0.9822$ \\
$1.25$ & $0.03978$ & $5.5\%$ & $0.9923$ & $0.9889$ \\
$1.50$ & $0.01398$ & $5.0\%$ & $0.9943$ & $0.9904$ \\
$1.75$ & $0.004943$ & $4.4\%$ & $0.9944$ & $0.9889$ \\
$2.00$ & $0.001760$ & $4.0\%$ & $0.9927$ & $0.9860$ \\
\bottomrule
\end{tabular}
\end{table}

The quality of fit varies by less than two percent across a range over which
the rate varies by a factor of $214$. One decade of width cannot separate the
two parameters, and no claim in this paper depends on their separation.

What the table also shows is that the group order independence of the rate
holds at every value of $\alpha$, with a coefficient of variation between
$4.0$ and $6.8$ percent throughout, and that it tightens slightly as $\alpha$
rises. The conclusion of Sec.~\ref{sec:fact} is therefore about how $c$
responds to a change in regularisation rather than about the value of $c$, and
that distinction is what makes it robust to a parameter the data cannot fix.

Determining $\alpha$ would require widths spanning several decades rather than
one. At $p=113$ the largest width used here is $128$, and the memorisation
plateau occupies everything below about $16$, so the usable range is under one
decade. Reaching two would require widths near $2000$ at the same group order,
which is feasible and which we have not done.

\subsection{The ReLU runs}
\label{app:relu}

The transfer experiment of Sec.~\ref{sec:fact} uses the same data, split,
optimiser and width grid as the main sweep, with three changes forced by the
activation.

Initialisation follows He rather than the fan in rule, with standard deviation
$\sqrt{2/\mathrm{fan\,in}}$ in both layers. The learning rate is $10^{-2}$
rather than $3\times10^{-3}$. The budget is $40\,000$ steps rather than
$8000$, with the loss also recorded at $20\,000$ so that convergence is
checked per cell rather than assumed.

The learning rate was chosen by a preliminary scan over
$\{10^{-3},3\times10^{-3},10^{-2}\}$ crossed with weight decay in
$\{0.05,0.1,0.25,0.5\}$ at $p=47$ and $h=128$, run to $150\,000$ steps with
the accuracy recorded at eight logarithmically spaced checkpoints. Nine of the
twelve settings exceed accuracy $0.99$. The step at which they cross it
depends strongly on the learning rate, being $5000$ at $10^{-2}$ with weight
decay $0.5$, $40\,000$ at $3\times10^{-3}$ with the same weight decay, and
beyond $150\,000$ at $10^{-3}$. Three settings were still climbing at
$150\,000$ steps. An earlier attempt at $3\times10^{-3}$ and $20\,000$ steps
reached only $0.65$ and would have been reported as an architecture failure
had the scan not been run.

Because activation and learning rate changed together, the quadratic sweep was
repeated at $10^{-2}$ for $p\in\{47,113\}$ and weight decay in
$\{0.1,0.25,0.5,1\}$. Over the eight cells the ratio of rates is
$c(10^{-2})/c(3\times10^{-3})=0.98\pm0.04$, with seven of eight between
$0.89$ and $1.03$. The single outlier is $p=47$ at weight decay $0.1$, which
attains accuracy $0.944$ and does not pass the learning criterion. Within the
quadratic activation the group order gap at $10^{-2}$ is $6.0\%$ over the
cells that solve the task, against $3$ to $5\%$ at $3\times10^{-3}$.

No dead units were observed at any width or weight decay, with the fraction of
hidden units never active on the training set equal to zero throughout.

Across the cells that solve the task the exponential family reaches
$R^{2}=0.963$ against $0.869$ for the best power law, against $0.978$ and
$0.899$ for the quadratic activation on the same grid, so the departure from a
power law survives the change of activation almost intact. The rate does not.
It differs between group orders by $61\%$ on average, with $p=113$ above
$p=47$ at every weight decay tested, by a factor of $1.8$ on average. The critical width under ReLU
is $79$ to $88$ at $p=47$ and $23$ to $58$ at $p=113$, against $26$ to $31$
and $24$ to $27$ for the quadratic activation. The two architectures agree at
the larger group order and diverge threefold at the smaller one, where the
training set holds $1104$ pairs against $6384$.

\subsection{Truncation under ReLU}

The ReLU gap between group orders is not a truncation artefact. Extending
$p=47$ from $h=128$ to $h=384$ resolves the floor in all four cells and moves
the gap only from $61\%$ to $57\%$.

\subsection{Rescalings that do not rescue the factorisation}

Since ReLU needs a larger width, the rate may be expressed in the wrong units,
and a dimensionless product might collapse the two activations. Six candidates
were examined, formed from $c$ together with the critical width $\hc$, the
width $h^{*}=(\log A-\log\Linf)/c$ at which the exponential term meets the
floor, and $\Kmax$.

The best of them is $c\,\hc$, which reduces the spread across activations at
matched weight decay from $38.3\%$ to $9.4\%$, and across group order from
$14.4\%$ to $11.1\%$. We do not adopt it. The ratio
between activations is $0.852\pm0.116$ rather than unity, so a systematic
offset of fifteen percent remains. The improvement across group order is an
artefact of pooling the two activations: within the quadratic activation
alone, where the factorisation is claimed, the same rescaling raises the mean
spread across group order at matched weight decay from $3.7\%$ to $7.3\%$. And
the ranking is not stable across criteria, since
$c\,h^{*}$ gives the smaller spread across group order at $6.4\%$ while giving
a larger one across activations at $19.3\%$. Selecting the best of six
candidates on the quantity one wishes to collapse is a procedure that will
usually succeed on noise, and the outcome here is consistent with that rather
than with a change of units.

\subsection{Noise floor at large group order}

Long runs at $50\,000$ steps were used to determine whether the fitted floor
at large $p$ is an asymptote or a stopping point. Held out accuracy remains at
unity from step $4000$ onwards in every case, and the weight norm changes by
between $0.3\%$ and $2.6\%$ over the remaining $46\,000$ steps, so neither
forgetting nor norm starvation is occurring.

The loss nonetheless wanders. Its increments change sign one to three times
over the plateau and the ratio of maximum to minimum lies between $1.11$ and
$1.80$. The step at which the minimum occurs is $8000$, $4000$, $32\,000$,
$16\,000$, $8000$, $50\,000$, $32\,000$ and $16\,000$ across the eight
configurations examined, which is consistent with a random walk and not with a
drift.

Reduced precision arithmetic is not the cause. Disabling it raises the mean
ratio from $1.365$ to $1.969$ rather than lowering it, which locates the
effect in the optimiser rather than in the arithmetic. The picture is of a
basin flat enough that AdamW with decoupled weight decay does not settle, and
the wandering amplitude is the width of that basin as seen through the cross
entropy.

The consequence is visible in the scaling of $\Linf$ with $\Kmax$: up to
$\Kmax=26$ the log log slope is $-4.5$, and from $\Kmax=26$ onwards it
flattens to $-1.1$, the flattening being the noise rather than the task. Since
adjacent large group orders differ in $\Linf$ by factors between $1.2$ and
$1.5$, and the wandering amplitude is comparable, the floor is not resolvable
there. The slope reported in Sec.~\ref{sec:exp} is
accordingly restricted to $\Kmax\le26$, where the fitted floors exceed
$7\times10^{-5}$ and sit above the noise.

\section{The training fraction experiment}
\label{app:frac}

Section~\ref{sec:fact} reports that the factorisation fails under ReLU, and
the Discussion attributes the failure to data supply rather than to the
activation. That attribution is a prediction and this appendix tests it.

\subsection{Design}

Raising the training fraction shrinks the held out set, so a rate fitted at
one fraction is not directly comparable with a rate fitted at another. The
splits are nested: the partition is drawn once from a fixed seed, so the
training set at a fraction of one half is contained in the training set at
four fifths and the held out set at four fifths is contained in the held out
set at one half. Every rate reported here is therefore fitted on the set held
out at four fifths, which is $442$ pairs at $p=47$ and $2554$ at $p=113$ and
is disjoint from the training set at both fractions. The items are identical
across the comparison and only the quantity of training data changes.

The quadratic activation is run as a control. If the training fraction moved
the rate where the factorisation already holds, the ReLU arm would be
uninformative. The control needs $24\,000$ steps rather than the $8000$ of the
main sweep: at four fifths there are sixty percent more training pairs, and at
$8000$ steps the loss is still falling between the last two checkpoints by
$18\%$ to $30\%$ at weight decay $0.5$ and below. Fitting a rate to a curve
that has not levelled off biases it, and biases it differently at the two
group orders, which is exactly the comparison at issue.

Weight decay $0.25$ is excluded from the comparison. Its cells are the least
stable in the sweep: of the five whose loss rises rather than falls between
the last two checkpoints, which is the plateau wandering of
Appendix~\ref{app:fit} and not a failure to converge, three are at this weight
decay, and the largest excursion anywhere in the study is a near doubling at
$p=47$ and four fifths. Weight decay $1$ and $0.5$ are stable at both
fractions and both group orders.

\subsection{Result}

Table~\ref{tab:frac} gives the fitted rates. Writing the discrepancy with its
sign, as $(c_{47}-c_{113})$ over their mean, so that a negative value means
the smaller group order lags. This is not the same quantity as the
$61\%$ quoted in Sec.~\ref{sec:fact}, which is an unsigned mean over four
weight decays on the native holdout in the main ReLU sweep; here it is signed,
restricted to the two weight decays resolved at all four combinations of group
order and fraction, and measured on the common holdout. At the half fraction
the two agree in magnitude to within five points.

\begin{center}
\begin{tabular}{lcc}
\toprule
 & one half & four fifths \\
\midrule
quadratic & $-8.5\%$ & $+8.5\%$ \\
ReLU      & $-56.5\%$ & $-3.7\%$ \\
\bottomrule
\end{tabular}
\end{center}

The movement is $+17.0$ points under the quadratic activation and $+52.7$
under ReLU. It has the same sign in both: additional data raises the rate at
the smaller group order relative to the larger one. What differs is the size
of the deficit available to be closed. Under ReLU the deficit was large and
closes to within the seed to seed resolution of the experiment; under the
quadratic activation it was already small and the same intervention carries it
past zero.

The critical width says the same thing without any fit. At $\lambda=1$ the
ratio $\hc(47)/\hc(113)$ falls from $1.53$ to $0.95$ under ReLU as the
fraction rises, so the two group orders come to agree and the anomaly that
motivated the experiment disappears. Under the quadratic activation, which
never showed that anomaly, the ratio goes from $1.14$ to $1.30$. These values
are measured on the common holdout, at $24\,000$ steps under the quadratic
activation and $40\,000$ under ReLU, and so differ slightly from
Table~\ref{tab:hc}, which is the main sweep at $8000$ steps on the native
one.

\begin{table}[h]
\caption{Fitted rate $c$ on the common holdout at two training fractions.
Quadratic runs use $24\,000$ steps, ReLU runs $40\,000$.}
\label{tab:frac}
\centering
\begin{tabular}{llcccc}
\toprule
 & & \multicolumn{2}{c}{$\lambda=0.5$} & \multicolumn{2}{c}{$\lambda=1$} \\
\cmidrule(lr){3-4}\cmidrule(lr){5-6}
activation & fraction & $p=47$ & $p=113$ & $p=47$ & $p=113$ \\
\midrule
quadratic & $1/2$ & $0.1407$ & $0.1541$ & $0.1162$ & $0.1258$ \\
quadratic & $4/5$ & $0.1941$ & $0.1627$ & $0.1376$ & $0.1384$ \\
ReLU      & $1/2$ & $0.0360$ & $0.0773$ & $0.0328$ & $0.0492$ \\
ReLU      & $4/5$ & $0.0764$ & $0.0779$ & $0.0561$ & $0.0593$ \\
\bottomrule
\end{tabular}
\end{table}

\subsection{What this does and does not establish}

The prediction the Discussion makes is confirmed for the case it was made
about. It is not established that the training fraction is inert elsewhere:
under the quadratic activation it moves the rate too, and at weight decay
$0.5$ it moves it enough to reverse the sign of the residual. The
factorisation reported in Sec.~\ref{sec:fact} is therefore a statement about a
regime in the data supply as well as in the architecture, and the boundary of
that regime is what the two activations locate at different places.

\end{document}